% Void the \nofiles command
% File: anonymous-submission-latex-2024.tex
\documentclass[letterpaper]{article} % DO NOT CHANGE THIS
\usepackage{aaai24}  % DO NOT CHANGE THIS
\usepackage{times}  % DO NOT CHANGE THIS
\usepackage{helvet}  % DO NOT CHANGE THIS
\usepackage{courier}  % DO NOT CHANGE THIS
\usepackage[hyphens]{url}  % DO NOT CHANGE THIS
\usepackage{graphicx} % DO NOT CHANGE THIS
\urlstyle{rm} % DO NOT CHANGE THIS
  % DO NOT CHANGE THIS
\usepackage{natbib}  % DO NOT CHANGE THIS AND DO NOT ADD ANY OPTIONS TO IT
\usepackage{caption} % DO NOT CHANGE THIS AND DO NOT ADD ANY OPTIONS TO IT
\frenchspacing  % DO NOT CHANGE THIS
\setlength{\pdfpagewidth}{8.5in} % DO NOT CHANGE THIS
\setlength{\pdfpageheight}{11in} % DO NOT CHANGE THIS
%
% These are recommended to typeset algorithms but not required. See the subsubsection on algorithms. Remove them if you don't have algorithms in your paper.
\usepackage{algorithm}
\usepackage{algorithmic}

\newif\ifneedappendix
\needappendixfalse

% Comment out the following line if you need to submit the main text
\needappendixtrue 

%
%
% Commonly used packages, definitions, and notations
%
% Yixin Liu
% yixinliucs@gmail.com
%
%

%%%%%%%%%%%%%%%%%%%%%%%%%%%%%%%%%%%%%%%%%%%%%%%%%%%%%%%%%
%%%%%%%%% commonly used packages
%%%%%%%%%%%%%%%%%%%%%%%%%%%%%%%%%%%%%%%%%%%%%%%%%%%%%%%%%

\let\proof\relax

\usepackage{xspace}

\usepackage{amsthm}
\theoremstyle{definition}

\usepackage{epsfig,amsmath,amsfonts,epsfig,multirow,makecell,caption,soul,csquotes,subcaption,mathtools,bm,spverbatim,booktabs,tcolorbox,diagbox,todonotes}
\usepackage[e]{esvect}

\usepackage{latexsym}
\usepackage{amsmath}
\usepackage{amssymb}
\usepackage{bm}
\usepackage{mathrsfs}
\usepackage{url}

\newcommand*{\argmax}{\mathop{\mathrm{argmax}}}
% \newcommand*{\st}{\mathop{\mathrm{s.t.}}}

%%%%%%%% Bold Greek Letters %%%%%%%%%%%%%%%

%%%%%%%% Mess around with LaTeX %%%%%%%%%%%%%%%

%% Some style files might actually define these variables.
%% So don't mess with them if they are already defined

\ifx\BlackBox\undefined
\newcommand{\BlackBox}{\rule{1.5ex}{1.5ex}}  % end of proof
\fi

\ifx\QED\undefined
\def\QED{~\rule[-1pt]{5pt}{5pt}\par\medskip}
\fi

\ifx\proof\undefined
\newenvironment{proof}{\par\noindent{\bf Proof\ }}{\hfill\BlackBox\\[2mm]}
\fi

\ifx\theorem\undefined
\newtheorem{theorem}{Theorem}

\newtheorem{definition}[theorem]{Definition}

\fi

\ifx\axiom\undefined

\fi

%%%%%%%% Utility functions %%%%%%%%%%%%%%%

% \newcommand{\ea}{\emph{et al.}}

\newcommand{\ie}{\emph{i.e.}}

%\newcommand{\alex}[1]{{\bf ALEX: \uppercase{#1}}}
%\newcommand{\vishy}[1]{{\bf VISHY: \uppercase{#1}}}
%\newcommand{\rene}[1]{{\bf RENE: \uppercase{#1}}}
%\newcommand{\karsten}[1]{{\bf KARSTEN: \uppercase{#1}}}

%%%%%%%% Specific symbols for this project %%%%%%%%%%%%%%%

%\newcommand{\tDiag}{\textsf{Diag}}
%\newcommand{\tTr}{\textsf{Tr}}
%\newcommand{\tE}{\textsf{E}}
%\newcommand{\tVec}{\textsf{Vec}}
%\newcommand{\tRank}{\textsf{Rank}}

%%%%%%%%%%%%%%%%%%%%%%%%%%%%%%%%%%%%%%%%%%%%%%%%%%%%%%%%%%%%%%%%%%%%%%%%%%%
% math symbols and commands
%%%%%%%%%%%%%%%%%%%%%%%%%%%%%%%%%%%%%%%%%%%%%%%%%%%%%%%%%%%%%%%%%%%%%%%%%%%
%\newcommand{\eq}[1]{(\ref{#1})}
%\newcommand{\mymatrix}[2]{\left[\begin{array}{#1} #2 \end{array}\right]}

%brackets

\renewcommand{\url}[1]{{\sffamily #1}}

%%%%%%%%%% commonly-used notations
\newcommand{\norm}[1]{\left\lVert #1 \right\rVert}

\newcommand{\header}[1]{\noindent\textbf{#1}}

%%%%%%%%%%%%%%%%%%%%%%%%%%%%%%%%%%%%%%%%%%%%%%%%%%%%%%%%%
%%%%%%%%% space trick
%%%%%%%%%%%%%%%%%%%%%%%%%%%%%%%%%%%%%%%%%%%%%%%%%%%%%%%%%

\newcommand{\addfigs}[5]{
\begin{subfigure}{#5\linewidth}
    \def\vardataset{#1}
    \def\varradius{#2}
    \def\varindex{#3}
    \def\varcap{#4}
    \centering
    \begin{subfigure}{0.15\linewidth}
        \includegraphics[width=\linewidth]{./fig/\vardataset/\varindex/clean.png}
        \subcaption*{Clean}
    \end{subfigure}
    \hspace{0.001\linewidth}
    \begin{subfigure}{0.15\linewidth}
        \includegraphics[width=\linewidth]{./fig/\vardataset/\varindex/\varradius/noisy_em.png}
        \includegraphics[width=\linewidth]{./fig/\vardataset/\varindex/\varradius/em.png}
        \subcaption*{EM}
    \end{subfigure}
    \begin{subfigure}{0.15\linewidth}
        \includegraphics[width=\linewidth]{./fig/\vardataset/\varindex/\varradius/noisy_tap.png}
        \includegraphics[width=\linewidth]{./fig/\vardataset/\varindex/\varradius/tap.png}
        \subcaption*{TAP}
    \end{subfigure}
    \begin{subfigure}{0.15\linewidth}
        \includegraphics[width=\linewidth]{./fig/\vardataset/\varindex/\varradius/noisy_shortcut.png}
        \includegraphics[width=\linewidth]{./fig/\vardataset/\varindex/\varradius/shortcut.png}
        \subcaption*{SC}
    \end{subfigure}
    \begin{subfigure}{0.15\linewidth}
        \includegraphics[width=\linewidth]{./fig/\vardataset/\varindex/\varradius/noisy_rem.png}
        \includegraphics[width=\linewidth]{./fig/\vardataset/\varindex/\varradius/rem.png}
        \subcaption*{REM}
    \end{subfigure}
    \begin{subfigure}{0.15\linewidth}
        \includegraphics[width=\linewidth]{./fig/\vardataset/\varindex/\varradius/noisy_ours.png}
        \includegraphics[width=\linewidth]{./fig/\vardataset/\varindex/\varradius/ours.png}
        \subcaption*{SEM}
    \end{subfigure}
    \ifthenelse{\equal{\varcap}{}}{}{\subcaption{\varcap}}
\end{subfigure}
}

%
% These are are recommended to typeset listings but not required. See the subsubsection on listing. Remove this block if you don't have listings in your paper.
\usepackage{newfloat}
\usepackage{listings}
\DeclareCaptionStyle{ruled}{labelfont=normalfont,labelsep=colon,strut=off} % DO NOT CHANGE THIS
\lstset{%
	basicstyle={\footnotesize\ttfamily},% footnotesize acceptable for monospace
	numbers=left,numberstyle=\footnotesize,xleftmargin=2em,% show line numbers, remove this entire line if you don't want the numbers.
	aboveskip=0pt,belowskip=0pt,%
	showstringspaces=false,tabsize=2,breaklines=true}
\floatstyle{ruled}
\newfloat{listing}{tb}{lst}{}
\floatname{listing}{Listing}
%
% Keep the \pdfinfo as shown here. There's no need
% for you to add the /Title and /Author tags.
\pdfinfo{
/TemplateVersion (2024.1)
}

\usepackage{amsfonts,bm}
\usepackage{times}
\usepackage{latexsym}
\usepackage{url}
\usepackage{amsmath}
\usepackage{multirow}
\usepackage{colortbl}
\usepackage{tabularx,paralist}
\usepackage{tcolorbox}
\usepackage{adjustbox}
\usepackage{transparent}
\usepackage{booktabs}
\usepackage{tikz-dependency}
\usepackage{verbatim}
\usepackage{amssymb}
\usepackage{caption, subcaption}
\usepackage[T1]{fontenc}
\usepackage{enumitem}
\usepackage[utf8]{inputenc}
\usepackage{microtype}
\definecolor{grey}{rgb}{0.1,0.1,0.1}

\setcounter{secnumdepth}{0} %May be changed to 1 or 2 if section numbers are desired.

% The file aaai23.sty is the style file for AAAI Press
% proceedings, working notes, and technical reports.
%

% Title

% Your title must be in mixed case, not sentence case.
% That means all verbs (including short verbs like be, is, using, and go),
% nouns, adverbs, and adjectives should be capitalized, including both words in hyphenated terms, while
% articles, conjunctions, and prepositions are lowercase unless they
% directly follow a colon or long dash

%Example, Multiple Authors, ->> remove \iffalse,\fi and place them surrounding AAAI title to use it
\title{Stable Unlearnable Example: Enhancing the Robustness of Unlearnable Examples via Stable Error-Minimizing Noise
% \\\btodo{Aug 16 final submission due; Aug 08 abstract due; Aug 18 app due; 
% }
% % \href{https://aaai.org/aaai-conference/}{site}
% }
}
% \author {
%     Anonymous Submission
% }
% \affiliations{}
\author {
    % Authors
    Yixin Liu \textsuperscript{\rm 1},
	Kaidi Xu  \textsuperscript{\rm 2}, 
    Xun Chen \textsuperscript{\rm 3},
    Lichao Sun\textsuperscript{\rm 1},
}
\affiliations {
    % Affiliations
    \textsuperscript{\rm 1} Lehigh University, Bethlehem, Pennsylvania, USA\\
\textsuperscript{\rm 2} Drexel University,  Philadelphia, Pennsylvania, USA\\
\textsuperscript{\rm 2} Samsung Research America, Mountain View, California, USA\\
     \{yila22, lis221\}@lehigh.edu, kx46@drexel.edu, Xun.chen@samsung.com
}

\begin{document}

\maketitle

\begin{abstract}
The open sourcing of large amounts of image data promotes the development of deep learning techniques. Along with this comes the privacy risk of these image datasets being exploited by unauthorized third parties to train deep learning models for commercial or illegal purposes. To avoid the abuse of data, a poisoning-based technique, ``unlearnable example'', has been proposed to significantly degrade the generalization performance of models by adding imperceptible noise to the data. To further enhance its robustness against adversarial training, existing works leverage iterative adversarial training on both the defensive noise and the surrogate model. However, it still remains unknown whether the robustness of unlearnable examples primarily comes from the effect of enhancement in the surrogate model or the defensive noise. Observing that simply removing the adversarial noise on the training process of the defensive noise can improve the performance of robust unlearnable examples, we identify that solely the surrogate model's robustness contributes to the performance. Furthermore, we found a negative correlation exists between the robustness of defensive noise and the protection performance, indicating defensive noise's instability issue. Motivated by this, to further boost the robust unlearnable example, we introduce Stable Error-Minimizing noise (SEM), which trains the defensive noise against random perturbation instead of the time-consuming adversarial perturbation to improve the stability of defensive noise. Through comprehensive experiments, we demonstrate that SEM achieves a new state-of-the-art performance on CIFAR-10, CIFAR-100, and ImageNet Subset regarding both effectiveness and efficiency. The code is available at \underline{https://github.com/liuyixin-louis/Stable-Unlearnable-Example}.
\end{abstract}

\section{Introduction}
The proliferation of open-source and large-scale datasets on the Internet has significantly advanced deep learning techniques across various fields, including computer vision \cite{he2016deep, dosovitskiy2020image, cao2023comprehensive}, natural language processing \cite{devlin2018bert, vaswani2017attention, zhou2023comprehensive}, and graph data analysis \cite{wang2018non, kipf2016semi, sun2022adversarial}. However, this emergence also poses significant privacy threats, as these datasets can be exploited by unauthorized third parties to train deep neural networks (DNNs) for commercial or even illegal purposes. For instance, \citet{hill2019photos} reported that a tech company amassed extensive facial data without consent to develop commercial face recognition models.
\begin{figure}[t]
    \centering
    \includegraphics[width=0.9\linewidth]{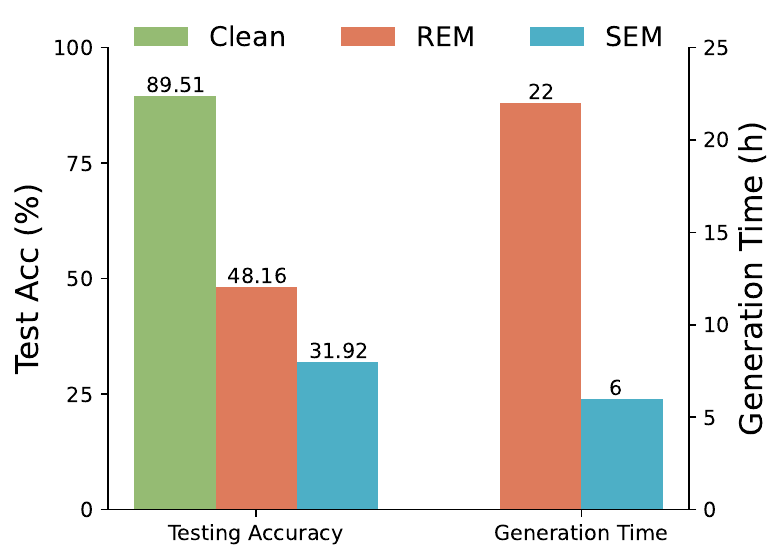}
    \caption{
        The performance comparison on CIFAR-10 between the current SoTA method, the robust error-minimizing noise (REM) \cite{fuRobustUnlearnableExamples2022}, and our proposed stable error-minimizing noise (SEM). Our SEM outperforms the REM in terms of both effectiveness and generation efficiency.
        }
    \label{fig: performance}
\end{figure}

To tackle this problem, recent works have introduced \textit{Unlearnable Examples} \cite{fowl2021adversarial, huang2020unlearnable, fowl2021preventing}, which aims to make the training data ``unlearnable'' for deep learning models by adding a type of invisible noise. This added noise misleads the learning model into adopting meaningless shortcut patterns from the data rather than extracting informative knowledge \cite{yuAvailabilityAttacksCreate2022}. However, such conferred unlearnability is vulnerable to adversarial training~\cite{huang2020unlearnable}.
In response, the concept of robust error-minimizing noise (REM) was proposed in \cite{fuRobustUnlearnableExamples2022} to enhance the efficacy of error-minimizing noise under adversarial training, thereby shielding data from adversarial learners by minimizing adversarial training loss. Specifically, a min-min-max optimization strategy is employed to train the robust error-minimizing noise generator, which in turn produces robust error-minimizing noise. To enhance stability against data transformation, REM leverages the Expectation of Transformation (EOT) \cite{athalye2018synthesizing} during the generator training. The resulting noise demonstrates superior performance in advanced training that involves both adversarial training and data transformation.

However, a primary drawback of REM, is its high computational cost. Specifically, on the CIFAR-10 dataset, it would take nearly a full day to generate the unlearnable examples, which is very inefficient. Consequently, enhancing the efficiency of REM is vital, especially when scaling up to larger real-world datasets like ImageNet \cite{russakovsky2015imagenet}. To improve the efficiency of robust unlearnable example generation algorithms, in this paper, we take a closer look at the time-consuming adversarial training process in both the surrogate model and the defensive noise. As empirically demonstrated in Tab. \ref{tab: comparison}, we can see that the performance of the robust unlearnable example mainly comes from the effect of adversarial training on the surrogate model rather than the defensive noise part. Surprisingly, the presence of adversarial perturbation in the defensive noise crafting will even lead to performance degradation, indicating that we need a better optimization method in this part. 

To elucidate this intriguing phenomenon and enhance the robustness of unlearnable examples, we begin by defining the robustness of both the surrogate model and defensive noise. Subsequently, our correlation analysis reveals that the robustness of the surrogate model is the primary contributing factor. Conversely, we observe a negative correlation between the robustness of defensive noise and data protection performance. We hypothesize that the defensive noise overfits to monotonous adversarial perturbations, leading to its instability. To address this issue, we introduce a novel noise type, \textit{stable error-minimizing noise} (SEM). Our SEM is trained against random perturbations, rather than the more time-consuming adversarial perturbations, to enhance stability. We summarize our contributions as follows:
\begin{itemize}
    \item We establish that the robustness of unlearnable examples is largely attributable to the surrogate model's robustness, rather than that of the defensive noise. Furthermore, we find that adversarially enhancing defensive noise can actually degrade its protective performance.
    \item To mitigate such an instability issue, we introduce stable error-minimizing noise (SEM), which trains the defensive noise against random perturbations instead of the more time-consuming adversarial ones, to improve the stability of the defensive noise.
    \item Extensive experiments empirically demonstrate that SEM achieves a SoTA performance on CIFAR-10, CIFAR-100, and ImageNet Subset regarding both effectiveness and efficiency. Notably, SEM achieves a $3.91\times$ speedup and an approximately $17\%$ increase in testing accuracy for protection performance on CIFAR-10 under adversarial training with $\epsilon = 4/255$.
\end{itemize}

\begin{figure*}[!thbp]
    \centering
    \includegraphics[width=0.9\linewidth]{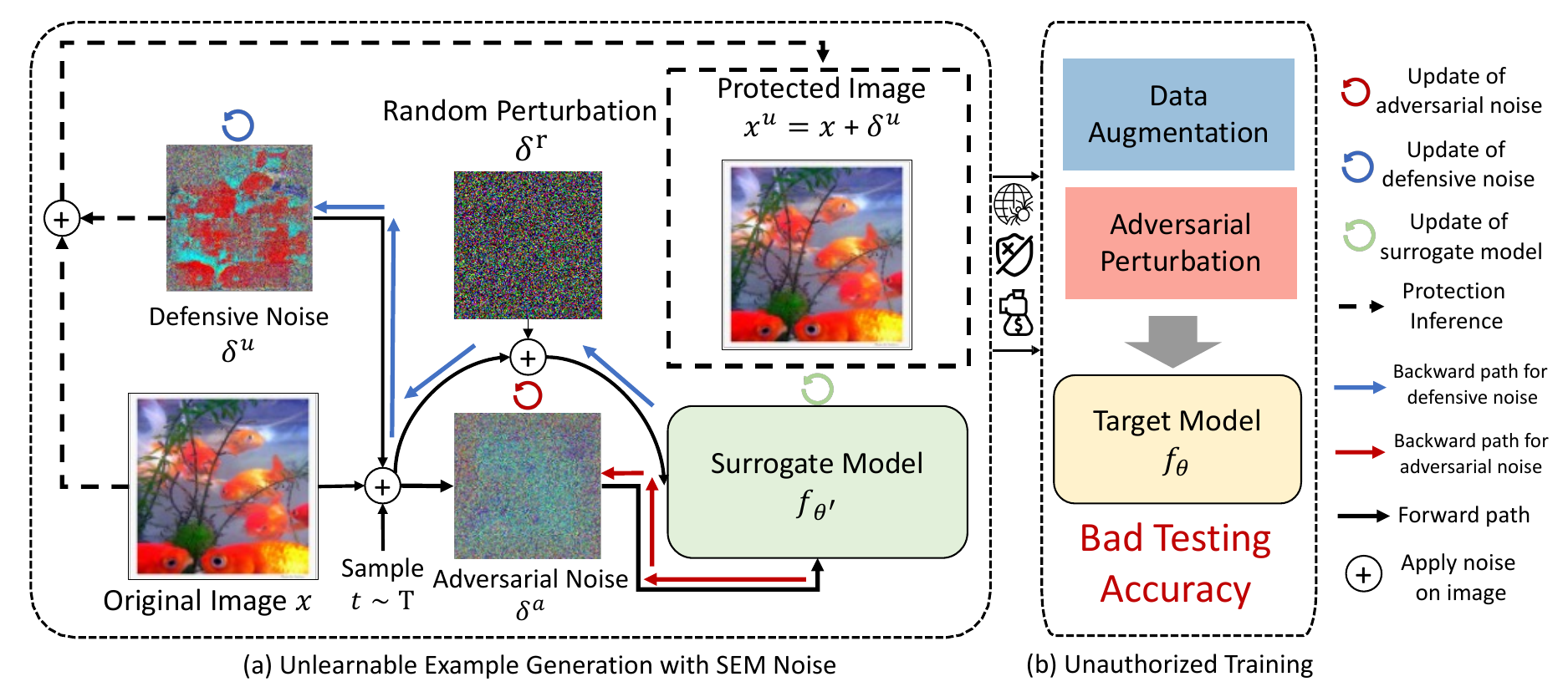}
    \caption{The overall framework of our approach. Our approach consists of two phases: noise training and generator training. During the noise training phase, we train the defensive noise, denoted as $\delta^u$, to counter random perturbations. In the subsequent generator training phase, the original images, represented as $x$, are transformed to $x_{\text{input}} = t(x + \delta^u) + \delta^a$ before being input into the network. Here, $t$ represents a transformation derived from distribution $T$, and $\delta^a$ represents the adversarial perturbation produced using PGD. The noise generator, $f^{\prime}_{\theta}$, updates the network parameters, $\theta$, by minimizing adversarial loss. By applying our defensive noise, models trained on the protected data learn minimal information and exhibit poor performance on clean data.
    }
    \label{fig:framework}
\end{figure*}

\section{Preliminaries}
\label{pre}
\header{Setup.} 
We consider a classification task with input-label pairs from a $K$-class dataset $\mathcal{T}={(x_i,y_i)}_{i=1}^{n}\subseteq \mathcal{X} \times \mathcal{Y}$, where $\mathcal{T}$ is constructed by drawing from an underlying distribution $\mathcal{D}(x,y)$ in an \textit{i.i.d.} manner, and $x\in\mathbb{R}^{d}$ represents the features of a sample. 
Let $f_{\theta}:\mathbb{R}^{d}\to\Delta$ be a neural network model that outputs a probability simplex, e.g., via a softmax layer.
In most learning algorithms, we employ empirical risk minimization (ERM), which aims to train the model $f_{\theta}$ by minimizing a loss function $\mathcal{L}(f(x), y)$ on the clean training set. This is achieved by solving the following optimization:
\begin{equation}
	\min_{\theta} \frac{1}{n} \sum_{i=1}^{n} \mathcal{L}\left(f_{\theta}\left(x_{i}\right), y_{i}\right).
\end{equation}

\begin{table}
    \centering
    \caption{Comparison between variants of REM and SEM on CIFAR-10 dataset. The $\checkmark$ indicates the corresponding method is used in the training process. $\delta^r$ indicates the defensive noise that is trained against random perturbation.}
    \label{tab: comparison}
    \begin{tabular}{cccccc} 
        \toprule
        \multirow{2}{*}{Method} & \multicolumn{2}{c}{Adv. Train} & \multirow{2}{*}{$\delta^r$} & \multirow{2}{*}{Time} & \multirow{2}{*}{Test Acc. (\%) $\downarrow$}  \\ 
        \cmidrule(l){2-3}
                                & $\theta$     & $\delta^u$      &                             &                       &                                           \\ 
        \midrule
        REM                     & $\checkmark$ & $\checkmark$    &                             & 22h                   & $46.72$                                   \\
        REM-$\delta^u$          &              & $\checkmark$    &                             & 15h                   & $88.28$                                   \\
        REM-$\theta$            & $\checkmark$ &                 &                             & 6h                    & $37.90$                                   \\ 
        \midrule
        SEM                     & $\checkmark$ &                 & $\checkmark$                & \textbf{6h}           & \textbf{30.26}                                   \\
        \bottomrule
        \end{tabular}
\end{table}

\header{Adversarial Training. } Adversarial training aims to enhance the robustness of models against adversarially perturbed examples~\cite{madry2018towards}. Specifically, adversarial robustness necessitates that $f_\theta$ performs well not only on $\mathcal{D}$ but also on the worst-case perturbed distribution close to $\mathcal{D}$, within a given adversarial perturbation budget. In this paper, we focus on the adversarial robustness of \emph{$\ell_\infty$-norm}: \ie, for a small $\epsilon > 0$, we aim to train a classifier $f_{\theta}$ that correctly classifies $(x+\delta, y)$ for any $\lVert\delta\rVert_{\infty}\leq\rho_a$\footnote{$\lVert\cdot\rVert$ in subsequent sections omits the subscript "$\infty$" for brevity.
}, where $(x, y)\sim \mathcal{D}$. Typically, adversarial training methods formalize the training of $f_\theta$ as a min-max optimization with respect to $\theta$ and $\delta$, \ie, 
\begin{equation}
	\min_{\theta} \frac{1}{n}\sum_{i=1}^n{\max_{\left\| \delta _i \right\| \le \rho _a}}\mathcal{L} \left( f_{\theta}\left( x_i+\delta _i \right) ,y_i \right) .
\end{equation}

\header{Unlearnable Example. } Unlearnable examples \cite{huang2020unlearnable} leverage clean-label data poisoning techniques to trick deep learning models into learning minimal useful knowledge from the data, thus achieving the data protection goal. By adding imperceptible defensive noise to the data, this technique introduces misleading shortcut patterns to the training process, thereby preventing the models from acquiring any informative knowledge. Models trained on these perturbed datasets exhibit poor generalization ability. Formally, this task can be formalized into the following bi-level optimization:
\begin{equation}
	\begin{gathered}
\max _{\norm{\delta^{u}_i} \le \rho_a} \underset{(x, y) \sim \mathcal{D}}{\mathbb{E}}\left[\mathcal{L} \left( f_{\theta ^*}(x),y \right) \right],\\
\,\,\mathrm{s}.\mathrm{t}.~\theta ^*=\underset{\theta}{\mathrm{arg}\min}\sum_{\left( x_i,y_i \right) \in \mathcal{T}}{\left[ \mathcal{L} \left( f_{\theta}\left( x_i+\delta _{i}^{u} \right) ,y_i \right) \right]}.
\end{gathered}
	\label{eq:unl}
\end{equation}

Here, we aim to maximize the empirical risk of trained models by applying the generated \textit{defensive perturbation} $\mathcal{P}^u=\left\{{\delta}_{i}^{u}\right\}_{i=1}^{n}$ into the original training set $\mathcal{T}$. Owing to the complexity of directly solving the bi-level optimization problem outlined in Eq. \ref{eq:unl}, several approximate methods have been proposed. These approaches include rule-based \cite{yuAvailabilityAttacksCreate2022}, heuristic-based \cite{huang2020unlearnable}, and optimization-based methods \cite{feng2019learning}, all of which achieve satisfactory performance in solving Eq. \ref{eq:unl}.

\header{Robust Unlearnable Example. } 
However, recent studies \cite{fuRobustUnlearnableExamples2022,huang2020unlearnable, tao2021better} have demonstrated that the effectiveness of unlearnable examples can be compromised by employing adversarial training. To further address this issue, the following robust unlearnable example generation problem is proposed, which can be illustrated as a two-player game consisting of a defender ${U}$ and an unauthorized user $\mathcal{A}$. The defender ${U}$ aims to protect data privacy by adding perturbation $\mathcal{P}^{u}$ to the data, thereby decreasing the test accuracy of the trained model, while the unauthorized user $\mathcal{A}$ attempts to use \textit{adversarial training} and \textit{data transformation} to negate the added perturbation and ``recover'' the original test accuracy. Based on \cite{fuRobustUnlearnableExamples2022}, we assume that the defender $U$ has complete access to the data they intend to protect and cannot interfere with the user’s training process after the protected images are released. Additionally, we assume, as per \cite{fuRobustUnlearnableExamples2022}, that the radius of defensive noise $\rho_u$ exceeds that of the adversarial training radius $\rho_a$, ensuring the problem's feasibility. Given a distribution $T$ over transformations ${t: \mathcal{X} \rightarrow \mathcal{X}}$, we have
\begin{equation}
\footnotesize{
\begin{array}{c}
	\mathcal{P} ^u=\mathop {\mathrm{arg}\max} \limits_{\forall i,|\delta _{i}^{u}|| \le \rho _u}\underset{(x,y)\sim \mathcal{D}}{\mathbb{E}}\left[ \mathcal{L} \left( f_{\theta ^*}(x),y \right) \right] ,\\
	\mathrm{s}.\mathrm{t}. \theta ^*=\underset{\theta}{\mathrm{arg}\min}\sum_{\left( x_i,y_i \right) \in \mathcal{T}}{\underset{t\sim T}{\mathbb{E}} \underset{||\delta _{i}^{a}|| \le \rho _a}{\max} \mathcal{L} \left( f_{\theta}\left( x_{i}^{\prime} \right) ,y_i \right)},\\
\end{array}
}
\label{eq:rob unl}
\end{equation}
where $x_i^\prime = t(x_i + \delta _{i}^{u}) + \delta _{i}^{a}$ represents the transformed data, with $\delta _{i}^{a}$ being the adversarial perturbation crafted using Projected Gradient Descent (PGD) and $\delta _{i}^{u}$ denoting the defensive noise. After applying $\mathcal{P}^u$, the protected dataset $\mathcal{T}^u=(x_i + \delta {i}^{u}, y_i)_{i=1}^{n}$ is obtained.

\section{Methodology}
\subsection{Iterative Training of Generator and Defensive Noise}
To address the problem presented in Eq. \ref{eq:rob unl}, REM introduces a robust noise-surrogate iterative optimization method, where a surrogate noise generator model, denoted as $\theta$, and the defensive noise, $\delta ^u$, are optimized alternately. 
From the model's perspective, the surrogate model $\theta$ is trained on iteratively \textit{perturbed poisoned data}, created by adding defensive and adversarial noise to the original training data, namely, $x^u_{\text{perturb}} = t(x + \delta^u) + \delta^a$. The surrogate model is trained to minimize the loss below to enhance its adversarial robustness:
\begin{equation}
    \theta ^{\prime}\gets \theta -\eta^\theta \nabla _{\theta}\mathcal{L} \left( f_{\theta}\left( t\left( x+\delta ^u \right) +\delta ^a \right) ,y \right).
    \label{eq: surrogate model}
\end{equation}
where $\eta^{(\cdot)}$ represents the learning rate, $t$ is a transformation sampled from $T$, and $\delta_a$ is the adversarial perturbation crafted using PGD, designed to maximize the loss. Conversely, the defensive noise $\delta ^u$ is trained to counteract the worst-case adversarial perturbation using PGD based on $\theta$. The main idea is that robust defensive noise should maintain effectiveness even under adversarial perturbations. Specifically, the defensive noise is updated by minimizing the loss,
\begin{equation}
    \delta ^{u}\gets \delta ^{u}-\eta ^u\nabla _{\delta ^{u}}\mathcal{L} \left( f_{\theta}\left( t\left( x+\delta ^u \right) +\delta ^a \right) ,y \right).
    \label{eq: def noise}
\end{equation}

During the optimization of the surrogate model $\theta$, the defensive noise $\delta^u$ is fixed. Conversely, when optimizing the defensive noise $\delta^u$, the surrogate model $\theta$ remains fixed. We repeat these steps iteratively until the maximum training step is reached. The final defensive noise is generated by, 
\begin{equation}
    \delta ^{u}=\underset{\left\| \delta ^{u} \right\| \le \rho _u}{\arg \min } \mathbb{E} _{t\sim T}\max _{\left\| \delta ^{a} \right\| \le \rho _a} \mathcal{L} \left( f_{\theta}\left( t\left( x+\delta ^u \right) +\delta ^a \right) ,y \right).
    \label{eq: final def noise}
\end{equation}

\begin{algorithm}[htbp]
\caption{Noise Generator Training for SEM approach}
\label{alg: sem-train}
\begin{algorithmic}[1]
\REQUIRE
Training data set $\mathcal{T}$, Training steps $M$, defense PGD parameters $\rho_u$, $\alpha_u$ and $K_u$, attack PGD parameters $\rho_a$, $\alpha_a$ and $K_a$, transformation distribution $T$, the sampling number $J$ for gradient approximation
\ENSURE Robust noise generator $f'_\theta$.
    \FOR{$i$ \textbf{in} $1, \cdots, M$}
        \STATE Sample minibatch $(x, y) \sim \mathcal{T}$, randomly initialize $\delta^u$.
        \FOR{$k$ \textbf{in} $1,\cdots, K_u$}
            \FOR{$j$ \textbf{in} $1,\cdots, J$}
                \STATE Sample noise and transformation $\delta^r_j \sim \mathcal{P}$, $t_j \sim T$.
            \ENDFOR
            \STATE $g_k \leftarrow \frac{1}{J} \sum_{j=1}^J \frac{\partial}{\partial \delta^u} \ell(f'_\theta(t_j(x+\delta^u) + \delta^r_j), y)$
            \STATE $\delta^u \leftarrow \prod_{\|\delta\|\leq\rho_u} \left( \delta^u - \alpha_u \cdot \mathrm{sign}(g_k) \right)$
        \ENDFOR
        \STATE Sample a transformation function $t \sim T$.
        \STATE $\delta^a \leftarrow \mathrm{PGD}(t(x+\delta^u),y,f'_\theta,\rho_a,\alpha_a,K_a)$ 
		\STATE Update source model $f'_\theta$ based on minibatch $(t(x+\delta^u)+\delta^a,y)$
    \ENDFOR
    % \STATE \Return $f'_\theta$
\end{algorithmic}
\label{alg:train}
\end{algorithm}

\subsection{Improving the Stability of Defensive Noise}
\label{sec: stability}
The adversarial perturbation, denoted as $\delta ^a $, is incorporated in the optimization processes of \textit{both} the surrogate model $\theta$ and the defensive noise $\delta ^u$ (refer to Eq. \ref{eq: surrogate model} and Eq. \ref{eq: def noise}). However, as indicated by the results in Tab. \ref{tab: comparison}, it appears that solely the adversarial perturbation $\delta ^a $ in the surrogate model $\theta$'s optimization contributes to the performance enhancement. Surprisingly, the adversarial perturbation in the optimization of defensive noise $\delta ^u$ (see Eq. \ref{eq: def noise}) results in \textit{performance degradation}. To elucidate these intriguing findings, we proceed with a correlation analysis as described below. We initially define the protection performance by $F=1-{\tt Acc}$, where ${\tt Acc}$ represents the testing accuracy of the trained model. Then, we propose the following definition to quantify the robustness of both the surrogate model and the defensive noise. 

\begin{definition}[Robustness of surrogate model]
\label{def: robustness of surrogate model}
Given a fixed surrogate model, denoted as $\theta$, we define its robustness as the model's resistance to adversarial perturbations, where the perturbation $\delta^u$ is updateable,
\begin{equation}
    \mathcal{R} _{\theta}= - \max _{\left\| \delta ^{a} \right\| \le \rho _a} \min _{\left\| \delta ^{u} \right\| \le \rho _u} \mathcal{L} \left( f_{\theta}\left( t\left( x+\delta ^u \right) +\delta ^a \right) ,y \right).
    \label{eq: surrogate robustness}
\end{equation}
\end{definition}
\begin{definition}[Robustness of defensive noise]
\label{def: robustness of defensive noise}
For a given fixed defensive noise, $\delta^u$, we define its robustness as the noise's resistance to adversarial perturbations, where the model parameter $\theta$ is updateable,
\begin{equation}
      \mathcal{R} _{\delta ^u}=-\max _{\left\| \delta ^{a} \right\| \le \rho _a} \min _\theta \mathcal{L} \left( f_{\theta}\left( t\left( x+\delta ^u \right) +\delta ^a \right) ,y \right).
    \label{eq: noise robustness}
\end{equation}
\end{definition}

To explore the correlation between the two formulated robustness terms, $\mathcal{R}$, and the protection performance, $F$, we followed the standard defensive noise training procedure as outlined in \cite{fuRobustUnlearnableExamples2022} and stored the surrogate model, $\theta_t$, and defensive noise, $\delta^u_t$, at various training steps denoted by $t$. Subsequently, for the obtained model or noise, we fixed one while randomly initializing the other, then solved Eq. \ref{eq: surrogate robustness} and Eq. \ref{eq: noise robustness} through an iterative training process.
\begin{figure}[thbp]
    \centering 
    \includegraphics[width=\linewidth]{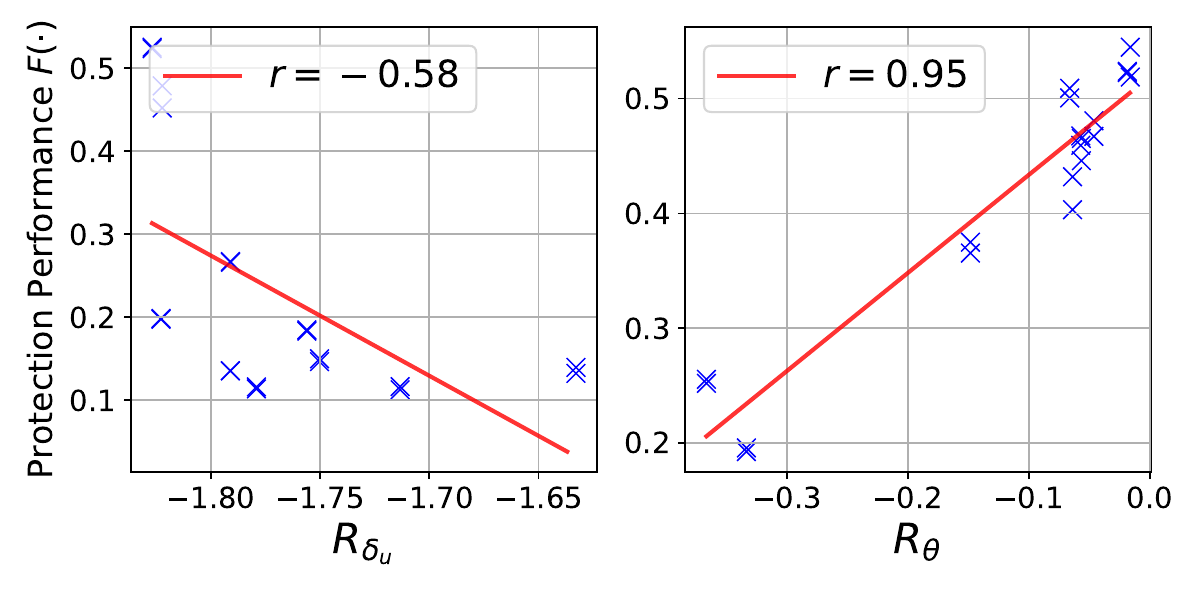}
    \caption{Exploration of the contribution of the robustness of defensive noise, denoted as $\mathcal{R}_{\theta}$, and the surrogate model, represented by $\mathcal{R}_{\delta^u}$, to the protection performance $F$. The Pearson correlation coefficients ($r$) quantify the strength of these relationships. Tests were conducted on the CIFAR-10 dataset with settings $\rho_a = 4/255$ and $\rho_u = 8/255$.
    }
    \label{fig: corr}
\end{figure}
From Fig. \ref{fig: corr}, we found that the \( \mathcal{R}_{\theta} \) demonstrates a strong positive correlation with the protection performance while \( \mathcal{R}_{\delta_u} \) displays a moderate negative correlation with the protection performance. 

This suggests that the protection performance of defensive noise is primarily and positively influenced by the robustness of the surrogate model, $\mathcal{R}_\theta$. Conversely, enhancing the robustness of defensive noise may paradoxically impair its protection performance. We hypothesize that the instability of defensive noise $\delta ^u$, trained following Eq. \ref{eq: def noise}, stems from the monotonic nature of the worst-case perturbation $\delta ^a$. To enhance its stability, we propose an alternative training objective for defensive noise,
\begin{equation}
    \delta ^{u}\gets \delta ^{u}-\eta ^u\nabla _{\delta ^{u}}\mathcal{L} \left( f_{\theta}\left( t\left( x+\delta ^u \right) +\delta^r \right) ,y \right),
    \label{eq: sem}
\end{equation}
where $\delta^r$ represents a random perturbation sampled from the uniform distribution $\mathcal{U}(-\rho_r, \rho_r)$. The radius of the random perturbation, $\rho_r$, is set to match that of the adversarial perturbation, $\rho_a$. Substituting $\delta^a$ in Eq. \ref{eq: def noise} with $\delta^r$ enables the crafted defensive noise to experience more diverse perturbations during training. We term the obtained defensive noise as stable error-minimizing noise (SEM). The overall framework and procedure are depicted in Fig. \ref{fig:framework} and Alg. \ref{alg: sem-train}.
\begin{table*}[t]
    %\small
    \centering
    \caption{
    Test accuracy (\%) of models trained on data protected by different defensive noises under adversarial training with various perturbation radii. The defensive perturbation radius $\rho_u$ is globally set at $8/255$, while the adversarial perturbation radius $\rho_a$ of REM varies. The lower the test accuracy, the better the effectiveness of the protection.
    }
    \resizebox{1.0\textwidth}{!}{
    \begin{tabular}{c|ccccc|ccccc|ccccc} 
        \toprule
        Datasets$\rightarrow$ & \multicolumn{5}{c}{CIFAR-10}                                                       & \multicolumn{5}{c}{CIFAR-100}                                                   & \multicolumn{5}{c}{ImageNet Subset}                                                                  \\ 
        \cmidrule(lr){1-1}\cmidrule(l){2-16}
        Methods$\downarrow$   & $\rho_a$=0     & 1/255          & 2/255          & 3/255          & 4/255          & 0              & 1/255         & 2/255         & 3/255         & 4/255          & 0             & 1/255          & 2/255                & 3/255                & 4/255                 \\ 
        \midrule
        Clean Data            & 94.66          & 93.74          & 92.37          & 90.90          & 89.51          & 76.27          & 71.90         & 68.91         & 66.45         & 64.50          & 80.66         & 76.20          & 72.52                & 69.68                & 66.62                 \\
        EM~                   & 13.20          & 22.08          & 71.43          & 87.71          & 88.62          & 1.60           & 71.47         & 68.49         & 65.66         & 63.43          & \textbf{1.26} & 74.88          & 71.74                & 66.90                & 63.40                 \\
        TAP~                  & 22.51          & 92.16          & 90.53          & 89.55          & 88.02          & 13.75          & 70.03         & 66.91         & 64.30         & 62.39          & 9.10          & 75.14          & 70.56                & 67.64                & 63.56                 \\
        NTGA                  & 16.27          & 41.53          & 85.13          & 89.41          & 88.96          & 3.22           & 65.74         & 66.53         & 64.80         & 62.44          & 8.42          & 63.28          & 66.96                & 65.98                & 63.06                 \\
        SC~                   & \textbf{11.63} & 91.71          & 90.42          & 86.84          & 87.26          & \textbf{ 1.51} & 70.62         & 67.95         & 65.81         & 63.30          &  11.0             &    75.06            & 71.26 & 67.14 & 62.58  \\
        REM~                  & 15.18          & 14.28          & 25.41          & 30.85          & 48.16          & 1.89        & 4.47          & 7.03          & 17.55         & 27.10          & 13.74         & 21.58          & 29.40                & 35.76                & 41.66                 \\ 
        \midrule
        \textbf{SEM}          & 13.0 & \textbf{12.16} & \textbf{11.49} & \textbf{20.91} & \textbf{31.92} & 1.95           & \textbf{3.26} & \textbf{4.35} & \textbf{9.07} & \textbf{20.25} & 4.1           & \textbf{10.34} & \textbf{13.76}       & \textbf{23.58}       & \textbf{37.82}        \\
        \bottomrule
    \end{tabular}
    }
    \label{tab:diff_rad}
\end{table*}

\begin{figure*}[htbp]
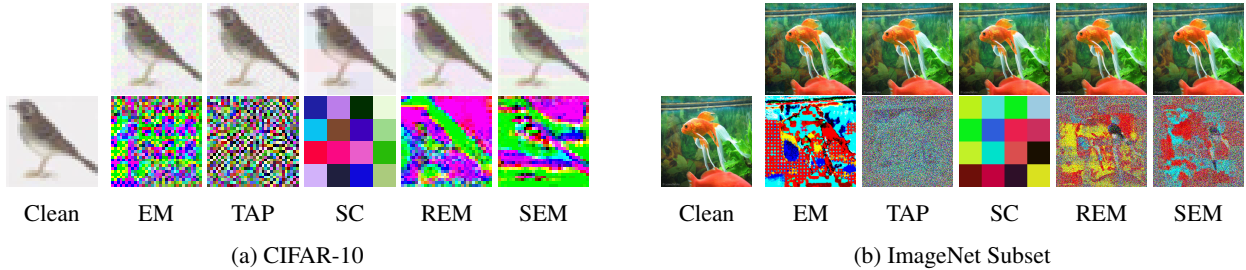

    \hspace{10pt}
    \addfigs{cifar10}{rad-8}{5132}{CIFAR-10}{0.45}
    \hspace{10pt}
    \addfigs{img-three}{rad-8}{1596}{ImageNet Subset}{0.45}
    \caption{
     Visualization of various noise and crafted examples for CIFAR-10 and ImageNet Subset datasets. The noise includes EM (Error-Minimizing noise), TAP (Targeted Adversarial Poisoning noise), NTGA (Neural Tangent Generalization Attack noise), SC (Shortcut noise), REM (Robust Error-Minimizing noise), and SEM (our proposed Stable Error-Minimizing noise).
    }
    \label{fig:vis-example-demo}
\end{figure*}

\section{Experiments}

\subsection{Experimental Setup}
\label{sec.setup}
\header{Dataset.}~We conducted extensive experiments on three widely recognized vision benchmark datasets, including CIFAR-10, CIFAR-100 \cite{krizhevsky2009learning}, and a subset of the ImageNet dataset \cite{russakovsky2015imagenet}, which comprises the first 100 classes from the original ImageNet. In line with \cite{fuRobustUnlearnableExamples2022}, we utilized random cropping and horizontal flipping for data augmentations. 
% The CIFAR-10 is used by default. 

\header{Model Architecture and Adversarial Training.} We evaluate the proposed method and the baselines across various vision tasks using five popular network architectures: VGG-16~\cite{simonyan2014very}, ResNet18/50~\cite{he2016deep}, DenseNet121~\cite{huang2017densely}, and Wide ResNet-34-10~\cite{zagoruyko2016wide}.
ResNet-18 is selected as the default target model in our experiments. 
By default, the defensive noise radius $\rho_u$ is set at $8/255$, and the adversarial training radius $\rho_a$ is set at $4/255$ for each dataset.
Besides, the setting for the adversarial training radius follows the guidelines of REM~\cite{fuRobustUnlearnableExamples2022}, ensuring $\rho_a \leq \rho_u$.
 
\header{Baselines.} We compare our stable error-minimizing noise {(\textbf{SEM})} with existing SoTA methods, including targeted adversarial poisoning noise {(\textbf{TAP})} \cite{fowl2021preventing}, error-minimizing noise {(\textbf{EM})} \cite{huang2021exploring}, robust error-minimizing noise {(\textbf{REM})} \cite{fuRobustUnlearnableExamples2022}, synthesized shortcut noise (\textbf{SC})~\cite{yuAvailabilityAttacksCreate2022}, and neural tangent generalization attack noise (\textbf{NTGA}) \cite{pmlr-v139-yuan21b}.

\subsection{Performance Analysis}

\begin{table}
    \renewcommand{\arraystretch}{1.1}
    \centering
    \caption{Test accuracy (\%) of different types of models on CIFAR-10 and CIFAR-100 datasets. The adversarial training perturbation radius is set as $4/255$. The defensive perturbation radius $\rho_u$ of every type of defensive noise is set as $8/255$. }
    \label{tab:diff_arch}
    \resizebox{\linewidth}{!}{
        \begin{tabular}{cccccccc} 
            \toprule
            Dataset                    & Model     & Clean & EM    & TAP            & NTGA  & REM~           & SEM~            \\ 
            \hline
            \multirow{5}{*}{CIFAR-10}  & VGG-16    & 87.51 & 86.48 & 86.27          & 86.65 & 65.23          & \textbf{44.37}  \\
                                       & RN-18     & 89.51 & 88.62 & 88.02          & 88.96 & 48.16          & \textbf{31.92}  \\
                                       & RN-50     & 89.79 & 89.28 & 88.45          & 88.79 & 40.65          & \textbf{28.89}  \\
                                       & DN-121    & 83.27 & 82.44 & 81.72          & 80.73 & 81.48          & \textbf{77.85}  \\
                                       & WRN-34-10 & 91.21 & 90.05 & 90.23          & 89.95 & 48.39          & \textbf{31.42}  \\ 
            \hline
            \multirow{5}{*}{CIFAR-100} & VGG-16    & 57.14 & 56.94 & 55.24          & 55.81 & \textbf{48.85} & 57.11
            \\
                                       & RN-18     & 63.43 & 64.17 & 62.39          & 62.44 & 27.10          & \textbf{20.25}  \\
                                       & RN-50     & 66.93 & 66.43 & 64.44          & 64.91 & 26.03          & \textbf{20.99}  \\
                                       & DN-121    & 53.73 & 53.52 & \textbf{52.93} & 52.40 & 54.48          & 55.36           \\
                                       & WRN-34-10 & 68.64 & 68.27 & 65.80          & 67.41 & 25.04          & \textbf{18.90}  \\
            \bottomrule
            \end{tabular}
    }
        
    \end{table}

\begin{table}[thbp]
    \centering
    \renewcommand\arraystretch{0.8}
    \caption{Test accuracy (\%) of different types of models trained on CIFAR-10 and CIFAR-100 that are processed by different low-pass filters.
    The defensive perturbation radius $\rho_u$ of every defensive noise is set as $16/255$.
    }
    \label{tab:diff_lp_filters}
    \resizebox{1.0\linewidth}{!}{
        \begin{tabular}{c c c c c c c c}
            \toprule
            \multirow{2}{1.2cm}{\centering Dataset} & \multirow{2}{0.8cm}{\centering Filter} & \multirow{2}{0.8cm}{\centering Clean} & \multirow{2}{0.8cm}{\centering EM} & \multirow{2}{0.8cm}{\centering TAP} & \multirow{2}{0.8cm}{\centering NTGA} & \multirow{2}{0.8cm}{\centering REM} & \multirow{2}{0.8cm}{\centering SEM} \\
            \\
            \midrule
            \multirow{3}{1.2cm}{\centering CIFAR-10} &
            \multirow{1}{1.2cm}{\centering Mean} &
            84.25 & 34.87 & 82.53 & 40.26 & 28.60 & \textbf{23.93} \\
            \cmidrule(lr){2-8}
            &
            \multirow{1}{1.2cm}{\centering Median} &
            87.04 & 31.86 & 85.10 & 30.87 & 27.36 & \textbf{22.00} \\
            \cmidrule(lr){2-8}
            &
            \multirow{1}{1.2cm}{\centering Gaussian} &
            86.78 & 29.71 & 85.44 & 41.85 & 28.70 & \textbf{21.77} \\
            \midrule
            \multirow{3}{1.2cm}{\centering CIFAR-100} &
            \multirow{1}{1.2cm}{\centering Mean} &
            52.42 & 53.07 & 51.30 & 26.49 & 13.89 & \textbf{8.81} \\
            \cmidrule(lr){2-8}
            &
            \multirow{1}{1.2cm}{\centering Median} &
            57.69 & 56.35 & 55.22 & 18.14 & 14.08 & \textbf{9.02} \\
            \cmidrule(lr){2-8}
            &
            \multirow{1}{1.2cm}{\centering Gaussian} &
            56.64 & 56.49 & 55.19 & 29.05 & 13.74 & \textbf{8.10} \\
            \bottomrule
            \end{tabular}
    }
\end{table}

\header{Different Radii of Adversarial Training. }Our initial evaluation focuses on the protection performance of various unlearnable examples under different adversarial training radii. The defensive perturbation radius is set to $\rho_u=8/255$, with ResNet-18 serving as both the source and target models. As shown in Tab. \ref{tab: comparison}, SC performs best under standard training settings ($\rho_a = 0$). However, with an increase in the adversarial training perturbation radius, the test accuracy of SC also rises significantly. Likewise, the other two baselines, EM and TAP, experience a significant increase in test accuracy with intensified adversarial training. In contrast, REM and SEM, designed for better robustness against adversarial perturbations, do not significantly increase test accuracy even with larger radii. When compared to REM, the results indicate that our SEM consistently outperforms this baseline in all adversarial training settings ($\rho_a \in [\frac{1}{255}, \frac{4}{255}]$), demonstrating the superior effectiveness of our proposed method in generating robust unlearnable examples.

\header{Different Architectures.} To investigate the transferability of different noises across neural network architectures, we chose ResNet-18 as the source model. We then examined the impact of its generated noise on various target models, including VGG-16, ResNet-50, DenseNet-121, and Wide ResNet-34-10. 
All transferability experiments were conducted using CIFAR-10 and CIFAR-100 datasets, as shown in Tab. \ref{tab:diff_arch}. 
The results show that unlearnable examples generated by the SEM method are generally more effective across architecture settings than REM, indicating higher transferability. 

\subsection{Sensitivity Analysis}
\label{app. sen}
\header{Resistance to Low-Pass Filtering.} We next analyze how various defensive noises withstand low-pass filtering. This approach is motivated by the possibility that filtering the image could eliminate the added defensive noise. As per \cite{fuRobustUnlearnableExamples2022}, we employed three low-pass filters: Mean, Median, and Gaussian \cite{young1995recursive}, each with a $3\times3$ window size. We set the adversarial training perturbation radius to $2/255$. The results in Tab. \ref{tab:diff_lp_filters} show that the test accuracy of the models trained on the protected data increases after applying low-pass filters, implying that such filtering partially degrades the added defensive noise. 
Nevertheless, SEM outperforms under both scenarios, with and without applying low-pass filtering in adversarial training.

\header{Resistance to Early Stopping and Partial Poisoning.} One might wonder how the early stopping technique influences the protection performance of our unlearnable examples.
To address this, we conducted experiments on CIFAR-10 and CIFAR-100, varying the early-stopping patience steps. 
We designated 10\% of the unlearnable examples for the validation set, with early-stopping patience $S_{es}$ set to range from 3k to 20k steps.
For examining partial poisoning, we varied the proportion of clean images in the validation set from 10\% to 70\%.
Results are detailed in Tab. \ref{tab: early_stopping}.
Observations from full poisoning (clean ratio equal to 0\%) show that setting $S_{es}$ to 1500 increases test accuracy by 4\%, implying minimal impact of early stopping on unlearnable examples.
However, this effect is not substantial and necessitates searching the early-stopping patience hyper-parameter for mitigation.
Increasing the clean ratio in the validation set further restores test accuracy to 57.48\%, yet it remains below the original level of 89.51\%. 
These results underscore the resistance of our unlearnable examples to early stopping.

\begin{table}[htbp]
    \centering
    \caption{Effect of early stop with different patience steps $S_{es}$. }
    \label{tab: early_stopping}
    \resizebox{\linewidth}{!}{
        \begin{tabular}{c|c|cccc} 
            \toprule
            
            Before                 & Clean Ratio (\%) & $S_{es}=3000$    & $25000$ & $10000$ & $20000$  \\
            \midrule
            \multirow{4}{*}{31.92} & 0                & \textbf{35.80} & 29.34   & 31.30   & 30.87    \\
            & 10 & 31.89 & \textbf{47.22} & 32.02 & 32.02 \\
            & 30 & 31.89 & \textbf{32.02} & 31.67 & 32.02 \\
            & 50 & \textbf{57.48} & 47.22 & 57.48 & 57.48 \\
            & 70 & \textbf{57.48} & 47.22 & 57.48 & 31.89 \\
            % & 90 & 57.48 & 57.48 & 57.48 & 57.48 \\
            \bottomrule
            \end{tabular}
    }
\end{table}

\begin{figure}[htpb]
\begin{minipage}[t]{0.48\linewidth}
  \centering
  \includegraphics[width=\linewidth]{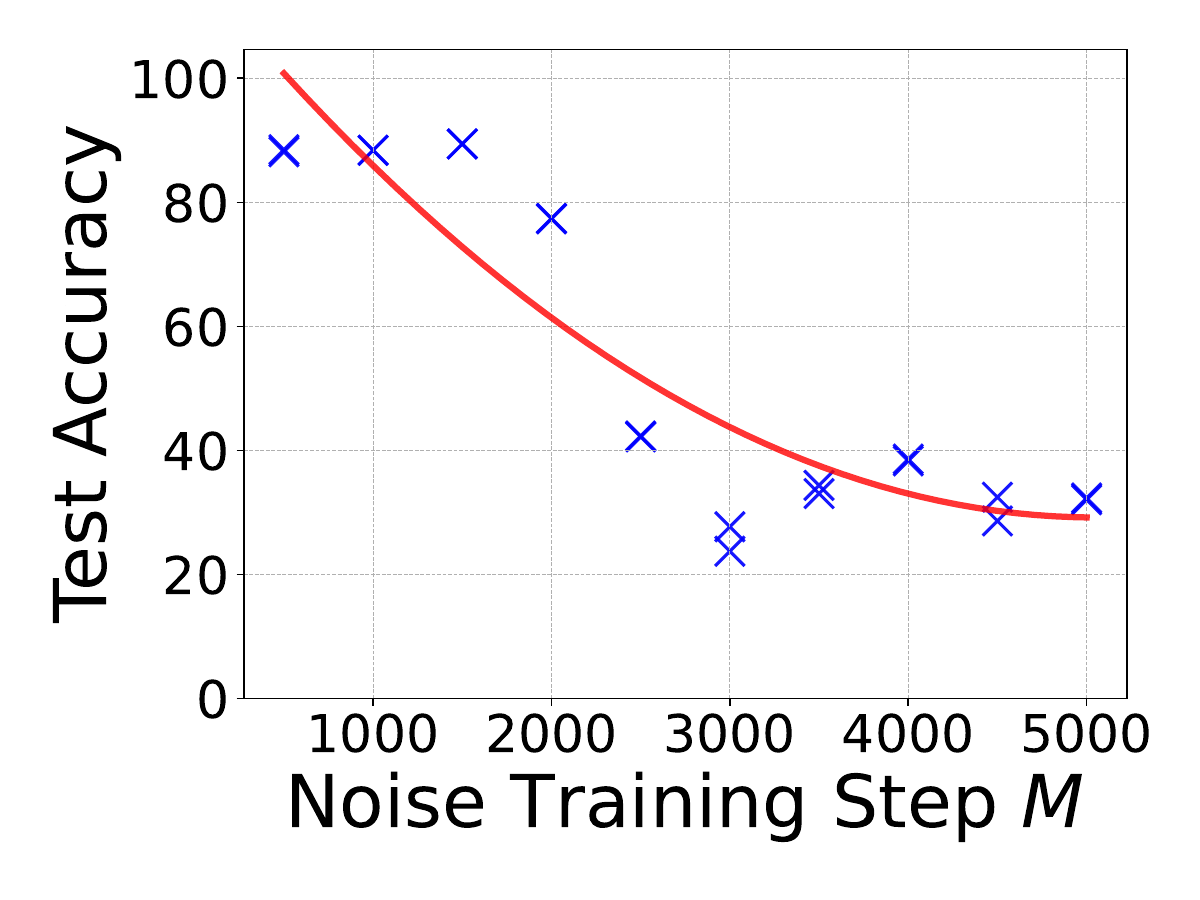}
\end{minipage}
\hfill 
\begin{minipage}[t]{0.48\linewidth}
  \centering
  \includegraphics[width=\linewidth]{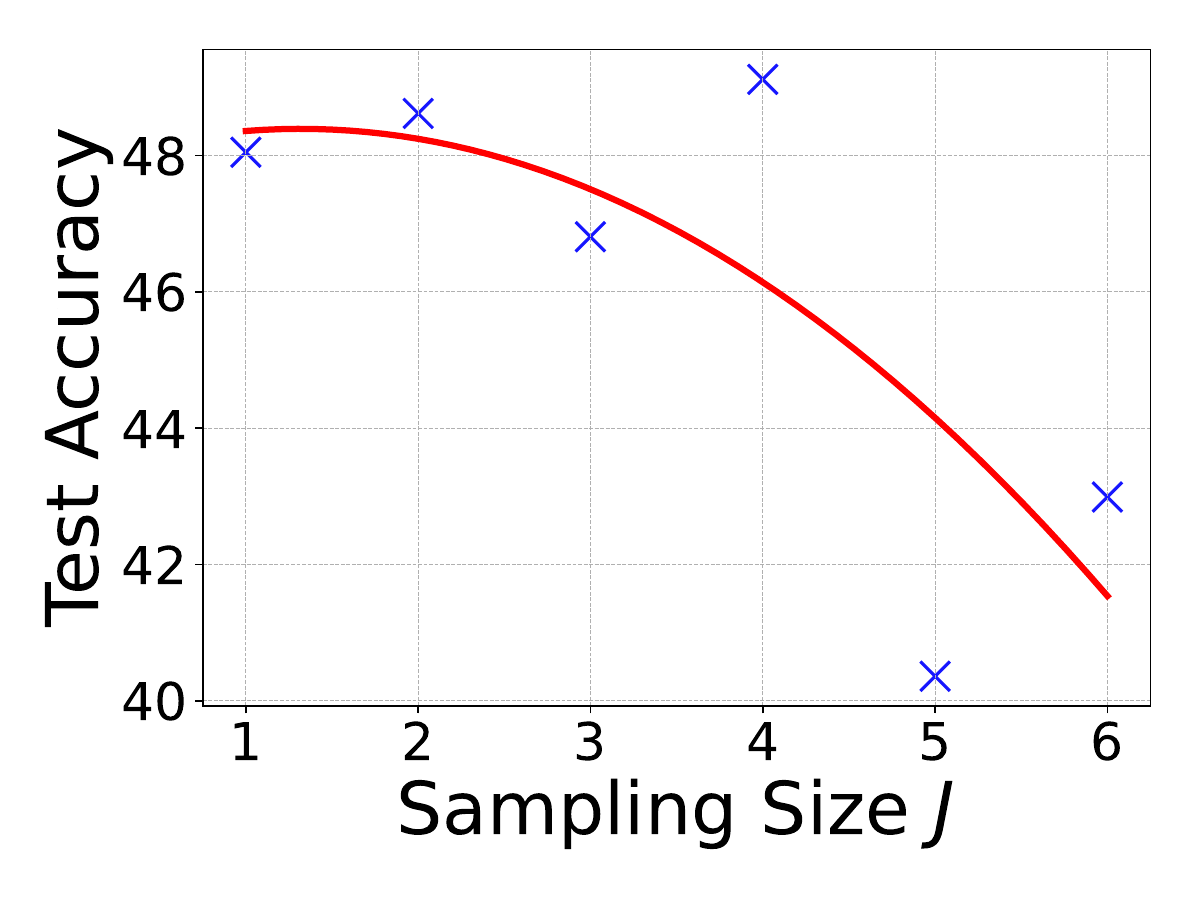}
\end{minipage}
\caption{Effect of noise training steps and sampling step size on the testing accuracy of the trained model. }
  \label{fig: noise_step_num_vs_test_acc}
\end{figure}

\header{Effect of Training Step $M$ and Sampling Size $J$.} We evaluated the impact of the training step $M$ and sampling size $J$ during noise generation, considering $M$ values from 500 to 5000 and $J$ ranging from 1 to 6. Results, as illustrated in Fig. \ref{fig: noise_step_num_vs_test_acc}, indicate that an increase in training steps enhances performance. This improvement could be attributed to the noise generator learning more robust noise patterns over extended training steps. Regarding sampling size $J$, an increase in $J$ was found to reduce the testing accuracy by nearly 8\%, underscoring its significance in data protection. 

\begin{table}
    \centering
    \caption{Effect of different radii of random perturbation.}
    \label{tab: diff_radius_random}
\resizebox{\linewidth}{!}{
    \begin{tabular}{cccccc} 
        \toprule
        Adv. Train. $\rho_a$ & $\rho_r = 0$ & $1/255$        & $2/255$        & $3/255$        & $4/255$         \\ 
        \midrule
        $0$                  & 15.87        & 11.41          & \textbf{10.42} & 16.83          & 10.52           \\
        $1/255$              & 16.59        & \textbf{12.11} & 15.80          & 17.92          & 12.16           \\
        $2/255$              & 64.24        & 13.44          & \textbf{11.49} & 19.13          & 21.68           \\
        $3/255$              & 89.18        & 28.6           & 21.88          & \textbf{20.91} & 23.43           \\
        $4/255$              & 89.30        & 67.52          & 35.63          & 62.09          & \textbf{31.92}  \\
        \bottomrule
        \end{tabular}
}
\end{table}

\header{Effect of Random Perturbation Radius.} The radius of random perturbation represents a crucial hyper-parameter in our approach. To explore its correlation with protection performance, we conducted experiments on CIFAR-10 and CIFAR-100 using various radii of random perturbation and adversarial training. Results are detailed in Tab. \ref{tab: diff_radius_random}. The results indicate that the best data protection performance is generally achieved when the random perturbation radius $\rho_r$ is set equal to the adversarial training radius $\rho_a$, across varying adversarial training radii. Nevertheless, when these two radii are mismatched, the protection performance drops slightly, indicating the importance of knowledge about adversarial training radius for protection effectiveness. 

\subsection{Ablation Study}
We conducted an ablation study to understand the impact of various components in our method, specifically focusing on the adversarial noise used to update the noise generator and the random noise used to update the defensive noise. 
Results, as shown in Tab. \ref{tab: ablation}, reveal that the adversarial noise $\delta^a$, used in updating the noise generator $\theta$, is a critical factor for protection performance. Its removal results in an approximate 60\% decrease in accuracy. 
Additionally, the elimination of random perturbation in updating the defensive noise leads to a reduction in protection performance by approximately 8\%. 
Furthermore, using purely adversarial noise to update the defensive noise results in deteriorated protection performance, highlighting the adverse effects of adversarial perturbations.

\begin{table}
    \centering
    \caption{Ablation study of the proposed method.}
    \label{tab: ablation}
\resizebox{\linewidth}{!}{
    \begin{tabular}{cccccc} 
        \toprule
        \multicolumn{2}{c}{$\delta^{u*}$} & \multicolumn{2}{c}{$\theta^*$} & \multirow{2}{*}{Test Acc. (\%)} & \multirow{2}{*}{Acc. Increase (\%)} \\ 
        \cmidrule(lr){1-2}\cmidrule(lr){2-4}
        \multicolumn{1}{c}{$\delta^{r}$} & \multicolumn{1}{c}{$\delta^{a}$} & \multicolumn{1}{c}{$\delta^{r}$} & \multicolumn{1}{c}{$\delta^{a}$}  \\ 
        \midrule
        $\surd$ & & & $\surd$ & \textbf{30.03} & - \\
        \midrule
        % \midrule
        & & & $\surd$ & 37.91 & +7.88 \\
        & $\surd$ & & $\surd$ & 46.72 & +16.69 \\
        $\surd$ & & & & 89.22 & +59.19 \\
        & & & & 89.15 & +59.13 \\
        \bottomrule
    \end{tabular}
}
\end{table}

\subsection{Case study: Face Recognition}
To evaluate our proposed method's effectiveness on real-world face recognition, we conducted experiments using the Facescrub dataset~\cite{ng2014data}. Specifically, we randomly selected ten classes, with each class comprising 120 images. We allocated 15\% of the data as the testing set, resulting in 1020 images for training and 180 for testing. 
Results presented in Tab. \ref{tab: face} indicate that robust methods, namely REM and SEM, outperform other non-robust methods in terms of data protection under adversarial training. 
In particular, the test accuracy of models trained on SEM-protected data fell to around 9\%, across various adversarial training radii, marking a significant drop in accuracy by approximately 40\% to 50\%. 
Additionally, we observed from Fig. \ref{fig: face-vis} that robust methods result in more vividly protected images. 
For instance, SC creates mosaic-like images, while EM and TAP substantially reduce luminance, impairing facial recognition. Conversely, REM and SEM perturbations concentrate on edges, maintaining visual similarity.

\begin{table}
\centering
\caption{Evaluation on a real-world face recognition dataset. }
\label{tab: face}
\resizebox{\linewidth}{!}{
    \begin{tabular}{cccccc}
        \toprule
        Methods & \(\rho_a=0\) & \(1/255\) & \(2/255\) & \(3/255\) & \(4/255\) \\
        \midrule
        Clean Data & 68.89 & 63.33 & 61.11 & 53.33 & 50.56\\
        EM~ & \textbf{8.89} & 18.06 & 18.61 & 16.39 & 19.72\\
       TAP~ & 9.72 & 20.74 & 26.48 & 32.22 & 36.39\\
        SC~ & 66.11 & 66.67 & 64.72 & 57.5 & 59.44\\
        REM~ & 10.89 & 7.89 & 11.67 & 12.22 & 11.48\\
        \midrule
        \textbf{SEM}& 10.56 & \textbf{7.04 }& \textbf{9.44} & \textbf{10.28} & \textbf{11.11}\\
        \bottomrule
    \end{tabular}
}
\end{table}

\begin{figure}[thbp]
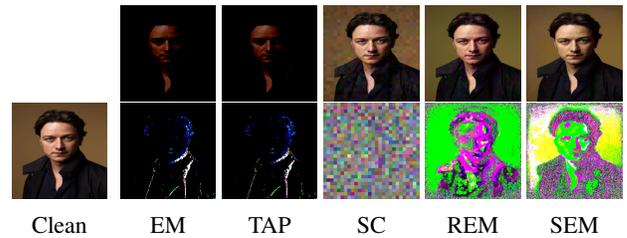

    \hspace{10pt}
    \centering
    \addfigs{face-10}{rad-8}{72}{}{1.0}
    \caption{
    Visualization of different types of defensive noise and crafted unlearnable examples on the Facescrub subset.
    }
    \label{fig: face-vis}
\end{figure}

\section{Related Works}
\header{Data Poisoning.} Data poisoning attacks aim to manipulate the performance of a machine learning model on clean examples by modifying the training examples. Recent research has shown the vulnerability of both DNNs~\cite{munoz2017towards} and classical machine learning methods, such as SVM \cite{biggio2012poisoning}, to poisoning attacks \cite{shafahi2018poison}. Recent advancements use gradient matching and meta-learning techniques to address the noise crafting problem \cite{geiping2020witches}. Backdoor attacks are kinds of data poisoning that try to inject falsely labeled training samples with a concealed trigger \cite{gu2017badnets}. Unlearnable examples, regarded as a clean-label and triggerless backdoor approach~\cite{gan2022triggerless, souri2022sleeper, zeng2023narcissus, xian2023understanding, li2022untargeted, liu2022friendly}, perturb features to impair the model's generalization ability.

\header{Unlearnable Example.} \citet{huang2020unlearnable} introduced error-minimizing noise to create ``unlearnable examples'', causing deep learning models to learn irrelevant features and generalize ineffectively.
Unlearnable examples have been adapted for data protection in various domains and applications \cite{liu2023unlearnable,liu2023securing, liu2023toward, liu2023graphcloak, sun2022coprotector,zhang2023unlearnable, li2023unganable, he2022indiscriminate, zhao2022clpa,salman2023raising, guo2023domain}. 
Despite its effectiveness, such conferred unlearnability is found fragile to adversarial training \cite{madry2018towards} and data transformations \cite{athalye2018synthesizing}. 
Recent studies \cite{tao2021better} show that applying adversarial training with a suitable radius can counteract the added noise, thereby restoring model performance and making the data ``learnable'' again.
Addressing this, \citet{fuRobustUnlearnableExamples2022} proposed a min-min-max optimization approach to create more robust unlearnable examples by learning a noise generator and developing robust error-minimizing noise. 
However, this method faces challenges with inefficiency and suboptimal protection performance, mainly due to the instability of defensive noise.
To overcome these limitations, our approach offers a more efficient and effective method for training enhanced defensive noise.

\section{Conclusion}
In this work, we introduced Stable Error-Minimizing (SEM), a novel method for generating unlearnable examples that achieve better protection performance and generation efficiency under adversarial training. Our research uncovers an intriguing phenomenon: the effectiveness of unlearnable examples in protecting data is predominantly derived from the robustness of the surrogate model. Motivated by this, SEM employs a surrogate model to create robust error-minimizing noise, effectively countering random perturbations. Extensive experimental evaluations confirm that SEM outperforms previous SoTA in terms of both effectiveness and efficiency. 

\section*{Acknowledgements}
This work was jointly conducted at Samsung Research America and Lehigh University, with partial support from the National Science Foundation under Grant CRII-2246067 and CCF-2319242. We thank Jeff Heflin and Maryann DiEdwardo for their valuable early-stage feedback on the manuscript; Shaopeng Fu for his insights regarding the REM codebase; Weiran Huang for engaging discussions and unique perspectives. We also thank our AAAI reviewers and meta-reviewers for their constructive suggestions. 

\bibliography{aaai24}
\clearpage
\ifneedappendix
    
\appendix

\section{Justification of Assumption $\rho_u \geq \rho_a$}
\label{app. A}
\begin{theorem}
    Given a classifier $f: \mathcal{X} \rightarrow \mathcal{Y}$, protection radius $\rho_u$, adversarial training radius $\rho_a$, clean dataset $\mathcal{T}^{c}=\{x_i,y_i\}_{i=1 \cdots N}$, protected dataset $\mathcal{T}^{u}=\{(x_i+\delta^{u}_{i},y_i)|\left\| {\delta ^u}_i \right\| _p\le \rho _u\}_{i=1 \cdots N}$. When $\rho_a \geq \rho_u$, conducting adversarial training on the protected dataset $\mathcal{T}^{u}$ is equivalent to minimizing the upper bound of training loss on the original clean dataset $\mathcal{T}^{c}$, which will ensure that it can recover the test accuracy on the original clean dataset and make data protection infeasible. 
\label{the. justi}
\end{theorem}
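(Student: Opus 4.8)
The plan is to exhibit, sample-by-sample, that the clean loss is a \emph{feasible} value inside the inner maximization of adversarial training on the protected data, so that the adversarially-robust objective pointwise dominates the clean objective. First I would write the adversarial-training objective on the protected set $\mathcal{T}^u$ explicitly as
\[
\min_\theta \frac{1}{N}\sum_{i=1}^N \max_{\norm{\delta^a_i}\le\rho_a} \mathcal{L}\bigl(f_\theta((x_i+\delta^u_i)+\delta^a_i),y_i\bigr).
\]
The crucial geometric observation is that the clean point $x_i$ lies inside the adversarial ball centered at the protected point $x_i+\delta^u_i$: since $\norm{(x_i+\delta^u_i)-x_i}_p=\norm{\delta^u_i}_p\le\rho_u\le\rho_a$, the choice $\delta^a_i=-\delta^u_i$ is admissible and recovers $x_i$ exactly.

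Second, substituting this feasible choice into each inner maximum yields the pointwise bound
\[
\max_{\norm{\delta^a_i}\le\rho_a}\mathcal{L}\bigl(f_\theta((x_i+\delta^u_i)+\delta^a_i),y_i\bigr)\ \ge\ \mathcal{L}\bigl(f_\theta(x_i),y_i\bigr),
\]
which holds for every $\theta$ and every index $i$ (the max over the ball is no smaller than its value at one admissible point). Averaging over $i$ then shows that the adversarial-training loss on $\mathcal{T}^u$ upper-bounds the clean ERM loss on $\mathcal{T}^c$ uniformly in $\theta$. Consequently, minimizing the former is minimizing an upper bound on the latter; in particular, driving the adversarial objective toward zero forces the clean training loss toward zero as well, so the surrogate behaves on the protected data exactly as a well-trained model would on clean data.

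The final step is interpretive rather than combinatorial: I would invoke the standard assumption that a small clean training loss, together with generalization over the hypothesis class, yields high test accuracy on the underlying distribution $\mathcal{D}$, so the protection is nullified and the dataset becomes ``learnable'' again. The main obstacle is \emph{not} the inequality itself --- that is immediate from the $\rho_a\ge\rho_u$ feasibility argument and the translation-invariance of the perturbation set --- but rather making precise the passage from ``small training loss'' to ``recovered test accuracy.'' This is genuinely a generalization statement, not an optimization one, and lies outside the per-sample loss comparison that forms the theorem's mathematical core; I would therefore state it explicitly as a hypothesis (e.g., realizability plus uniform convergence) rather than attempt to prove it, and confine the rigorous part of the argument to the dominance relation above.
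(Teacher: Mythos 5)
Your proof is correct and follows essentially the same route as the paper's: both hinge on observing that $\delta^a_i=-\delta^u_i$ is a feasible point of the inner maximization when $\rho_u\le\rho_a$, so the adversarial loss on $\mathcal{T}^u$ pointwise dominates the clean loss on $\mathcal{T}^c$ for every $\theta$. Your explicit flagging of the ``small training loss $\Rightarrow$ recovered test accuracy'' step as an unproven generalization hypothesis is if anything more careful than the paper, which asserts that conclusion without argument.
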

\begin{proof}
    Denote the natural loss on dataset $\mathcal{T}$ for classifier $f$ as $\mathcal{R} _{\mathrm{nat}}\left( f,\mathcal{T} \right) =\mathbb{E} _{(x,y)\sim \mathcal{T}}\left[ \ell \left( f\left( x \right) ,y \right) \right] $ and the adversarial training loss on dataset $\mathcal{T}$ for classifier $f$ with adversarial radius $r=\epsilon$ as $\mathcal{R} _{\mathrm{adv}}\left( f,\mathcal{T} ,\epsilon \right) =\mathbb{E} _{(x,y)\sim \mathcal{T}}\left[ \max_{\norm{\delta}_p\le \epsilon} \ell \left( f\left( x+\delta \right) ,y \right) \right] $. Conducting adversarial training on the protected dataset $\mathcal{T}^{u}$ with adversarial radius $\rho_a$ has the following objective
    \begin{equation*}
        \begin{aligned}
	\mathcal{R} _{\mathrm{adv}}\left( f,\mathcal{T} ^u,\rho _a \right) &=\mathbb{E} _{(x,y)\sim \mathcal{T} ^u} \max_{||\delta ^a||_p\le \rho _a} \ell \left( f\left( x+\delta ^a \right) ,y \right) \\
	% \small{
&=\mathbb{E} _{(x,y)\sim \mathcal{T} ^c} \max_{\norm{\delta^a}_p\le \rho _a} \ell \left( f\left( x+\delta ^u+\delta ^a \right) ,y \right) 
    % }
    \\
	&\ge \mathbb{E} _{(x,y)\sim \mathcal{T} ^c}\left[ \ell \left( f\left( x+\delta ^u-\delta ^u \right) ,y \right) \right]\\
	&=\mathbb{E} _{(x,y)\sim \mathcal{T} ^c}\left[ \ell \left( f\left( x \right) ,y \right) \right]\\
	&=\mathcal{R} _{\mathrm{nat}}\left( f,\mathcal{T} ^c \right)\\
\end{aligned}
    \end{equation*}
The first inequality holds since $\norm{\delta ^u}_p\le \rho _u \leq \rho_a$, thus for each sum term inside the expectation we have,
\begin{equation*}
\begin{aligned}
	\max_{\norm{\delta^a}_p\le \rho _a} \ell \left( f\left( x+\delta ^u+\delta ^a \right) ,y \right) &\ge \ell \left( f\left( x+\delta ^u-\delta ^u \right) ,y \right)\\
	&=\ell \left( f\left( x \right) ,y \right)\\
\end{aligned}
\end{equation*}
\end{proof}

\section{
    Baselines Implementation
}
\label{app. baseline}

\textbf{Error-minimizing noise \citep{huang2020unlearnable}.}
We train the error-minimizing noise generator by solving Eq. \ref{eq. unlearnable example}. The error-minimizing noise is then generated with the trained noise generator.
The ResNet-18 model is used as the source model $f'$.
The PGD method is employed for solving the inner minimization problem in Eq. (\ref{eq. unlearnable example}), where the settings of PGD are presented in Tab. \ref{tab: pgd_params}.
\begin{equation}
\min _\theta \frac{1}{n} \sum_{i=1}^n \min _{\left\|\delta_i\right\| \leq \rho_u} \ell\left(f_\theta^{\prime}\left(x_i+\delta_i\right), y_i\right)
\label{eq. unlearnable example}
\end{equation}

\header{Targeted adversarial poisoning noise \citep{fowl2021adversarial}.}
This type of noise is generated via conducting a targeted adversarial attack on the model trained on clean data, in which the generated adversarial perturbation is used as the adversarial poisoning noise.
Specifically, given a fixed model $f_0$ and a sample $(x,y)$, 
the targeted adversarial attack will generate noise by solving the problem
$\argmax_{\|\delta_u\| \leq \rho_u} \ell(f_0(x + \delta_u), g(y))$,
where $g$ is a permutation function on the label space $\mathcal{Y}$. We set $g(y)=(y+1)\%|Y|$. 
% PGD with {\it differentiable data augmentation} \citep{geiping2021witches} is employed for solving the above problem.
The hyper-parameters for the PGD are given in Tab. \ref{tab: pgd_params}.

\header{Neural tangent generalization attack noise \citep{pmlr-v139-yuan21b}.}
This type of protective noise aims to weaken the generalization ability of the model trained on the modified data. To do this, an ensemble of neural networks is modeled based on a neural tangent kernel (NTK), and the NTGA noise is generated upon the ensemble model. As a result, the NTGA noise enjoys remarkable transferability.
We use the official code of NTGA \footnote{The official code of NTGA is available at \url{https://github.com/lionelmessi6410/ntga}.} to generate this type of noise. Specifically, we employ the FNN model in \citet{pmlr-v139-yuan21b} as the ensemble model. For CIFAR-10 and CIFAR-100, the block size for approximating NTK is set as $4,000$. For the ImageNet subset, the block size is set as $100$. The hyper-parameters for the PGD are given in Tab. \ref{tab: pgd_params}. Please refer \citep{pmlr-v139-yuan21b} accordingly for other settings.

\header{Robust Error-minimizing Noise \citep{fuRobustUnlearnableExamples2022}.}~REM is a robust error-minimizing noise method that aims to improve further the robustness of unlearnable examples against adversarial training and data transformation. The surrogate model is trained by solving the min-min-max optimization in Eq. \ref{eq. rem}, and the noise is obtained via Eq. \ref{eq: final def noise}, 
\begin{equation}
\min _\theta \frac{1}{n} \sum_{i=1}^n \min _{\left\|\delta_i^u\right\| \leq \rho_u} \mathbb{E}_{t \sim T} \max _{\left\|\delta_i^a\right\| \leq \rho_a} \ell\left(f_\theta^{\prime}\left(t\left(x_i+\delta_i^u\right)+\delta_i^a\right), y_i\right)
\label{eq. rem}
\end{equation}

\header{Linearly-separable Synthesized Shortcut Noise \citep{yuAvailabilityAttacksCreate2022}.} 
% This type of noise is a linearly-separable hand-crafted synthesized shortcut noise method that aims to 
This type of noise aims to introduce some linearly separable high-frequent patterns by adding class-wise hand-crafted repeated noisy patches. Specifically, this method synthesizes some simple data points for an N-class classification problem, where three stages are included: class-wise initial points generation, duplicating for local correlation, and the final magnitude rescaling. For the noise frame size, we set it as 8 by default; for the norm size, we set it as 8/255 by default. For other settings, please refer to the original paper accordingly for other details. 

\begin{algorithm}[thbp]
\caption{Calculation of surrogate model's robustness $\mathcal{R} _{\theta}$}

\begin{algorithmic}[1]
\REQUIRE Trained defensive noise $\delta^u$, Training data set $\mathcal{T}$, Training steps $M$, attack PGD parameters $\rho_a$, $\alpha_a$ and $K_a$, transformation distribution $T$
\ENSURE The robustness of surrogate model $\mathcal{R} _{\theta}$
    \STATE $\mathcal{R} _{\theta} \gets 0$
    \FOR{$i$ \textbf{in} $1, \cdots, M$}
        \STATE Sample a minibatch $(x, y) \sim \mathcal{T}$,
        \STATE Sample a transformation function $t \sim T$.
        \STATE $\delta^a \leftarrow \mathrm{PGD}(t(x+\delta^u),y,f'_\theta,\rho_a,\alpha_a,K_a)$ 
		\STATE Update source model $f'_\theta$ based on minibatch $(t(x+\delta^u)+\delta^a,y)$
        \STATE $\mathcal{R} _{\theta} \gets \mathcal{R} _{\theta} + \ell(f^\prime_\theta(t(x+\delta^u)+\delta^a),y)$
    \ENDFOR
\end{algorithmic}
\label{alg: robustness model}
\end{algorithm}

\begin{algorithm}[htbp]
\caption{Calculation of defensive noise's robustness $\mathcal{R} _{\delta^u}$}
\begin{algorithmic}[1]
\REQUIRE
Trained surrogate model $\theta$, Training data set $\mathcal{T}$, Training steps $M$, defense PGD parameters $\rho_u$, $\alpha_u$ and $K_u$, attack PGD parameters $\rho_a$, $\alpha_a$ and $K_a$, transformation distribution $T$, the sampling number $J$ for gradient approximation
\ENSURE The robustness of defensive noise $\mathcal{R} _{\delta^u}$
\STATE $\mathcal{R} _{\delta^u} \gets 0$
    \FOR{$i$ \textbf{in} $1, \cdots, M$}
        \STATE Sample a minibatch $(x, y) \sim \mathcal{T}$
        \STATE Randomly initialize $\delta^u$.
        \FOR{$k$ \textbf{in} $1,\cdots, K_u$}
            \FOR{$j$ \textbf{in} $1,\cdots, J$}
        \STATE $\delta^a_j \leftarrow \mathrm{PGD}(t_j(x+\delta^u),y,f'_\theta,\rho_a,\alpha_a,K_a)$ 
                \STATE Sample a transformation function $t_j \sim T$.
            \ENDFOR
            \STATE $g_k \leftarrow \frac{1}{J} \sum_{j=1}^J \frac{\partial}{\partial \delta^u} \ell(f'_\theta(t_j(x+\delta^u) + \delta^a_j), y)$
            \STATE $\delta^u \leftarrow \prod_{\|\delta\|\leq\rho_u} \left( \delta^u - \alpha_u \cdot \mathrm{sign}(g_k) \right)$
        \ENDFOR
        \STATE Sample a transformation function $t \sim T$.
        \STATE $\delta^a \leftarrow \mathrm{PGD}(t(x+\delta^u),y,f'_\theta,\rho_a,\alpha_a,K_a)$ 
            \STATE $\mathcal{R} _{\delta^u} \gets \mathcal{R} _{\delta^u} + \ell(f^\prime_\theta(t(x+\delta^u)+\delta^a),y)$
    \ENDFOR
\end{algorithmic}
\label{alg: robustness noise}
\end{algorithm}

\section{Implementation Details}
\label{app. imple}

\subsection{Hardware and Software Details}
The experiments were conducted on an Ubuntu 20.04.6 LTS (focal) environment with 503GB RAM. And the experiments on CIFAR-10 and CIFAR-100 are conducted on 1 GPU (NVIDIA\textsuperscript{\textregistered} RTX\textsuperscript{\textregistered} A5000 24GB) and 32 CPU cores (Intel\textsuperscript{\textregistered} Xeon\textsuperscript{\textregistered} Silver 4314 CPU @ 2.40GHz). For the ImageNet, experiments are conducted on 2 GPU (NVIDIA\textsuperscript{\textregistered} RTX\textsuperscript{\textregistered} A5000 24GB) and with the same CPU configuration. Python 3.10.12 and Pytorch 2.0.1 are used for all the implementations.

\subsection{Method Details}
\header{Noise Regeneration.} \label{app. alg}
Following existing practices, after training a noise generator following Alg. \ref{alg: sem-train}, we leverage the generator to conduct a final round for generating defensive noise via Alg. \ref{alg: sem-gen}. We found that this trick can somewhat boost protection performance compared to leveraging defensive noise generated in the previous round or a weighted sum version. 
\begin{algorithm}[htbp]
\caption{Densive Noise Generation of SEM approach}
\label{alg: sem-gen}
\begin{algorithmic}[2]
\REQUIRE
Noise generator $f'_\theta$, Training data set $\mathcal{T}$, defense PGD parameters $\rho_u$, $\alpha_u$, attack PGD parameters $\rho_a$, $\alpha_a$, transformation distribution $T$, the sampling number $J$ for gradient approximation
\ENSURE Robust unlearnable example $\mathcal{T}^{\prime}=\{(x_i^{\prime},y_i)\}_{i=1}^{n}$
%    \STATE // Defensive Noise Generation
    \FOR{$(x_i, y_i)$ \textbf{in} $\mathcal{T}$}
    \STATE Randomly initialize $\delta^u_i$.
    \FOR{$k$ \textbf{in} $1,\cdots, K_u$}
            \FOR{$j$ \textbf{in} $1,\cdots, J$}
                \STATE {Random sample noise $\delta^r_j \sim \mathcal{P}$.}
                \STATE Sample a transformation function $t_j \sim T$.
            \ENDFOR
\STATE $g_{k}\leftarrow \frac{1}{J}\sum _{j=1}^{J}\frac{\partial }{\partial \delta ^{u}} \ell (f'_{\theta } (t_{j} (x+\delta ^{u} )+\delta _{j}^{r} ),y)$
            \STATE $\delta^u_i \leftarrow \prod_{\|\delta^u_i\|\leq\rho_u} \left( \delta^u_i - \alpha_u \cdot \mathrm{sign}(g_k) \right)$
        \ENDFOR
    \STATE $x_i^{\prime} \gets x_i + \delta^u_i$
    \ENDFOR
\end{algorithmic}
\label{gen}
\end{algorithm}

% , indicating that the protection performance decreases as \( \mathcal{R}_{\delta_u} \) increases. 
% Pearson: Measures linear correlation. Its value ranges from -1 to 1, where 1 indicates a perfect positive linear correlation, -1 indicates a perfect negative linear correlation, and 0 indicates no linear correlation.
% Spearman: Measures rank correlation. Its value also ranges from -1 to 1, where 1 indicates a perfect positive rank correlation, -1 indicates a perfect negative rank correlation, and 0 indicates no rank correlation.
% Kendall: Measures the strength of dependence between two variables. It also ranges from -1 to 1, with similar interpretations as Spearman's rank correlation.

\header{Data Augmentation.}~Following~\cite{fuRobustUnlearnableExamples2022}, we use different data augmentations for different datasets during training.
For CIFAR-10 and CIFAR-100, we perform the following data augmentations: random flipping, the padding on each size with $4$ pixels, random cropping the image to $32 \times 32$, and then rescaling per pixel to $[-0.5, 0.5]$ for each image.
For the ImageNet subset, data augmentation is conducted via RandomResizedCrop operation with $224 \times 224 $, random flipping, and then rescaling image with range $[-0.5, 0.5]$.
\\
\textbf{Adversarial Training.}~We focus on $L_\infty$-bounded noise $\norm{\rho_a}_\infty \leq \rho_a$ in adversarial training. Note that defensive noise radius $\rho_u$ is set to $8/255$, and adversarial training radius $\rho_a$ is set to $4/255$ for each dataset by default. And for the adversarial training radius setting rule, we follow the setting of REM~\cite{fuRobustUnlearnableExamples2022} with $\rho_a \leq \rho_u$. 
% \\
% \header{Adversarial Training. }
The detailed parameter setting is presented in \ref{tab:adv}, where $\alpha_d$, $\alpha_a$ denote the step size of PGD for defensive and adversarial noise generation $K_{\cdot}$ denotes the corresponding PGD step number.

\begin{table}[htbp]
	\centering
	\caption{
	The settings of PGD for the noise generations of different methods. $\rho_u$ denotes the defensive perturbation radius of different types of noise, while $\rho_a$ denotes the adversarial perturbation radius of the REM and SEM.
	}
	\label{tab: pgd_params}
\resizebox{\linewidth}{!}{
	\begin{tabular}{cccccc} 
\toprule
Datasets                                                                       & Noise Type     & $\alpha_d$   & $K_d$ & $\alpha_a$ & $K_a$  \\ 
\midrule
\multirow{3}{*}{\begin{tabular}[c]{@{}c@{}}CIFAR-10 \\ CIFAR-100\\Facescrub\end{tabular}} & EM             & $\rho_u/5$   & $10$  & -          & -      \\
                                                                               & TAP            & $\rho_u/125$ & $250$ & -          & -      \\
                                            
                                                                               & REM \& SEM(Ours) & $\rho_u/5$   & $10$  & $\rho_a/5$ & $10$   \\ 
\midrule
\multirow{3}{*}{ImageNet Subset}                                               & EM             & $\rho_u/5$   & $7$   & -          & -      \\
                                                                               & TAP            & $\rho_u/50$  & $100$ & -          & -      \\                                                                               & REM \& SEM(Ours) & $\rho_u/4$   & $7$   & $\rho_a/5$ & $10$   \\
\bottomrule
\end{tabular}
}

\label{tab:adv}
\end{table}

% \header{Hyper-parameters of case study. }

% \\

\header{Metrics.}~We use the test accuracy in the evaluation phase to measure the data protection ability of the defensive noise. A low test accuracy suggests that the model is fooled by defensive noise to have poor generalization ability. Thus it learned little information from the protected data, which implies better data protection effectiveness of the noise. We use the generation time of the defensive noise to evaluate efficiency. To exclude random error, all results are repeated three times with different random seeds $s \in \{0, 6178, 42\}$.

% \header{Baselines Implementation. }~~  
% \section{Different protection percentages}
% In this section, we conduct additional experiments to study the protection performance with different protection percentages. The results in \ref{tab: diff-ratio} show that our approach consistently outperforms baseline methods under most of the presented protection percentages. 

\begin{table}[htbp]
\centering
\caption{Correlation of the protection performance and two robustness terms under three different metrics.}
\label{tab: corr more}
\begin{tabular}{cccc} 
\toprule
                         & Pearson & Spearman & Kendall  \\
                         \midrule
$\mathcal{R}_{\theta}$   &  0.9551      & 	0.8422         &   0.6921       \\
$\mathcal{R}_{\delta_u}$ &   -0.5851      &    -0.7003      &   -0.5407       \\
\bottomrule
\end{tabular}
\end{table}

\section{More Results}
\label{app. vis}
We visualize more defensive noise and the corresponding unlearnable images in Fig. \ref{fig:more-vis-example}, Fig. \ref{fig:more-vis-example-cifar100}, Fig. \ref{fig:more-vis-img-sub} and Fig. \ref{fig:more-vis-face}. For the main table that compares different methods under various adversarial training radii, we present an extended version of Tab \ref{tab:diff_rad} with standard deviation in Tab. \ref{tab:diff_rad_def8_more} and Tab. \ref{tab:diff_rad_def16_more}. 

\header{Effectiveness under partial poisoning.}   
Under a more challenging realistic learning setting where only a portion of data can be protected by the defensive noise, we also study the effect of different poisoning ratios on the test accuracy of models trained on the poisoned dataset. From Tab. \ref{tab: diff_perc}, we can see that our method outperforms other baselines when the poisoning rate is equal to or higher than $80\%$, and 
the test accuracy of models trained on the poisoned dataset decreases as the poisoning ratio increases. However, under lower poisoning rate settings, where the model might learn from clean examples, all the existing noise types fail to degrade the model performance significantly. This result indicates a common limitation of existing unlearnable methods, calling for further studies investigating this challenging setting. 

\begin{table*}[t]
    \renewcommand{\arraystretch}{0.7}
    \centering
    \caption{Test accuracy (\%) on CIFAR-10 with different protection percentages. We set the defensive perturbation radius $\rho_u$ as $8/255$. For REM and SEM, we set $\rho_a$ and $\rho_a$ as $4/255$. 
    }
    \label{tab:diff_perc_c10}
    \resizebox{.8\linewidth}{!}{
\begin{tabular}{ccc|cc|cc|cc|cc|c} 
\toprule
\multirow{3}{*}{\begin{tabular}[c]{@{}c@{}}Adv. \\ Train. \\ $\rho_a$\end{tabular}} & \multirow{3}{*}{\begin{tabular}[c]{@{}c@{}}Noise \\ Type\end{tabular}} & \multicolumn{10}{c}{Data Protection Percentage}                                                                                                                                                                                   \\ 
\cmidrule{3-12}
                                                                                    &                                                                        & \multirow{2}{*}{0\%}   & \multicolumn{2}{c|}{20\%}               & \multicolumn{2}{c|}{40\%}               & \multicolumn{2}{c|}{60\%}               & \multicolumn{2}{c|}{80\%}                        & \multirow{2}{*}{100\%}  \\
                                                                                    &                                                                        &                        & Mixed          & Clean                  & Mixed          & Clean                  & Mixed          & Clean                  & Mixed                   & Clean                  &                         \\ 
\midrule
\multirow{6}{*}{$2/255$}                                                            & EM                                                                     & \multirow{6}{*}{92.37} & 92.26          & \multirow{6}{*}{91.30} & 91.94          & \multirow{6}{*}{90.31} & 91.81          & \multirow{6}{*}{88.65} & 91.14                   & \multirow{6}{*}{83.37} & 71.43                   \\
                                                                                    & TAP                                                                    &                        & \textbf{92.17} &                        & 91.62          &                        & 91.32          &                        & 91.48                   &                        & 90.53                   \\
                                                                                    & NTGA                                                                   &                        & 92.41          &                        & 92.19          &                        & 92.23          &                        & 91.74                   &                        & 85.13                   \\
                                                                                    & SC                                                                     &                        & 92.3           &                        & 91.98          &                        & 90.17          &                        & 87.58                   &                        & 47.66                   \\
                                                                                    & REM                                                                    &                        & 92.36          &                        & \textbf{90.22} &                        & \textbf{88.45} &                        & 84.82                   &                        & 30.69                   \\
                                                                                    & SEM                                                                    &                        & 91.94          &                        & 90.89          &                        & 88.90          &                        & \textbf{\textbf{82.98}} &                        & \textbf{11.32}          \\ 
\midrule
\multirow{6}{*}{$4/255$}                                                            & EM                                                                     & \multirow{6}{*}{89.51} & 89.60          & \multirow{6}{*}{88.17} & 89.40          & \multirow{6}{*}{86.76} & 89.49          & \multirow{6}{*}{85.07} & 89.10                   & \multirow{6}{*}{79.41} & 88.62                   \\
                                                                                    & TAP                                                                    &                        & \textbf{89.01} &                        & \textbf{88.66} &                        & \textbf{88.40} &                        & 88.04                   &                        & 88.02                   \\
                                                                                    & NTGA                                                                   &                        & 89.56          &                        & 89.35          &                        & 89.22          &                        & 89.17                   &                        & 88.96                   \\
                                                                                    & SC                                                                     &                        & 89.4           &                        & 89.14          &                        & 89.23          &                        & 89.02                   &                        & 84.43                   \\
                                                                                    & REM                                                                    &                        & 89.60          &                        & 89.34          &                        & 89.61          &                        & 88.09                   &                        & 48.16                   \\
                                                                                    & SEM                                                                    &                        & 89.82          &                        & 89.65          &                        & 88.66          &                        & \textbf{85.19}          &                        & \textbf{31.29}          \\
\midrule
\end{tabular}

    }
    
    % \vspace{-3mm}
    \label{tab: diff_perc}
    \end{table*}

\header{Robustness terms $\mathcal{R} _{\theta}$ and $ \mathcal{R} _{\delta ^u}$. } To solve Eq. \ref{eq: surrogate robustness} and Eq. \ref{eq: noise robustness}, we conduct iterative training following Alg. \ref{alg: robustness model} and Alg. \ref{alg: robustness noise} with trained model or defensive noise. Moreover, we present three metrics (Pearson, Spearman, and Kendall) in Tab. \ref{tab: corr more} to quantify the correlation between two robustness terms and the protection performance. These metrics, each unique in its way, provide diverse perspectives on the correlations, be it linear relationships or rank-based associations. The results show that \( \mathcal{R}_{\theta} \) exhibits a strong positive correlation with the protection performance across all three metrics. This suggests that as \( \mathcal{R}_{\theta} \) increases, the protection performance also increases. \( \mathcal{R}_{\delta_u} \), on the other hand, shows a moderate negative correlation with the protection performance across all three metrics. 

\begin{table*}[htbp]
    \centering
    \caption{Test accuracy (\%) of models trained on data protected by different defensive noises via adversarial training with different perturbation radii.
    The defensive perturbation radius $\rho_u$ is set as $8/255$ for every type of noise, while the adversarial perturbation radius $\rho_a$ of REM noise and SEM noise take various values.}
    \label{tab:diff_rad_def8_more}
    % \footnotesize
    % \scriptsize
\resizebox{\linewidth}{!}{
\begin{tabular}{cccccccccccccccc} 
\toprule
\multirow{2}{*}{Dataset}   & \multirow{2}{*}{\begin{tabular}[c]{@{}c@{}}Adv. Train. \\ $\rho_a$\end{tabular}} & \multirow{2}{*}{Clean} & \multirow{2}{*}{EM} & \multirow{2}{*}{TAP} & \multirow{2}{*}{NTGA} & \multicolumn{5}{c}{REM}                              & \multicolumn{5}{c}{SEM}                                                                                                           \\ 
\cmidrule(l){7-16}
                           &                                                                                  &                        &                     &                      &                       & $\rho_a = 0$ & $1/255$ & $2/255$ & $3/255$ & $4/255$ & $\rho_a = 0$     & $1/255$                   & $2/255$                   & $3/255$                   & $4/255$                    \\ 
\midrule
\multirow{5}{*}{CIFAR-10}  & $0$                                                                              & 94.66                  & 13.20               & 22.51                & 16.27                 & 15.18        & 13.05   & 20.60   & 20.67   & 27.09   & 15.87 $\pm$ 0.50 & 11.41 $\pm$ 0.31          & \textbf{10.42 $\pm$ 0.36} & 16.83 $\pm$ 1.76          & 10.52 $\pm$ 1.41           \\
                           & $1/255$                                                                          & 93.74                  & 22.08               & 92.16                & 41.53                 & 27.20        & 14.28   & 22.60   & 25.11   & 28.21   & 16.59 $\pm$ 0.14 & \textbf{12.11 $\pm$ 0.07} & 15.80 $\pm$ 0.21          & 17.92 $\pm$ 0.49          & 12.16 $\pm$ 1.30           \\
                           & $2/255$                                                                          & 92.37                  & 71.43               & 90.53                & 85.13                 & 75.42        & 29.78   & 25.41   & 27.29   & 30.69   & 64.24 $\pm$ 1.45 & 13.44 $\pm$ 0.21          & \textbf{11.49 $\pm$ 0.10} & 19.13 $\pm$ 1.11          & 21.68 $\pm$ 1.32           \\
                           & $3/255$                                                                          & 90.90                  & 87.71               & 89.55                & 89.41                 & 88.08        & 73.08   & 46.18   & 30.85   & 35.80   & 89.18 $\pm$ 0.19 & 28.6 $\pm$ 1.06           & 21.88 $\pm$ 0.62          & \textbf{20.91 $\pm$ 0.54} & 23.43 $\pm$ 1.44           \\
                           & $4/255$                                                                          & 89.51                  & 88.62               & 88.02                & 88.96                 & 89.15        & 86.34   & 75.14   & 47.51   & 48.16   & 89.30 $\pm$ 0.08 & 67.52 $\pm$ 1.62          & 35.63 $\pm$ 3.30          & 62.09 $\pm$ 0.93          & \textbf{31.92 $\pm$ 1.53}  \\ 
\midrule
\multirow{5}{*}{CIFAR-100} & $0$                                                                              & 76.27                  & \textbf{1.60}       & 13.75                & 3.22                  & 1.89         & 3.72    & 3.03    & 8.31    & 10.14   & 1.69 $\pm$ 0.03  & 1.95 $\pm$ 0.09           & 3.71 $\pm$ 0.19           & 3.57 $\pm$ 0.29           & 7.27 $\pm$ 0.44            \\
                           & $1/255$                                                                          & 71.90                  & 71.47               & 70.03                & 65.74                 & 9.45         & 4.47    & 5.68    & 9.86    & 11.99   & 9.68 $\pm$ 0.52  & \textbf{2.51 $\pm$ 0.13}  & 4.17 $\pm$ 0.03           & 5.31 {\small $\pm$ 0.32}                       & 10.07 $\pm$ 0.21           \\
                           & $2/255$                                                                          & 68.91                  & 68.49               & 66.91                & 66.53                 & 52.46        & 13.36   & 7.03    & 11.32   & 14.15   & 48.25 $\pm$ 0.51 & 10.45 $\pm$ 0.37          & \textbf{4.92 $\pm$ 0.16}  & 6.06 $\pm$ 0.19           & 10.95 $\pm$ 0.36           \\
                           & $3/255$                                                                          & 66.45                  & 65.66               & 64.30                & 64.80                 & 66.27        & 44.93   & 27.29   & 17.55   & 17.74   & 63.53 $\pm$ 0.21 & 42.20 $\pm$ 0.35          & 23.85 $\pm$ 0.14          & \textbf{7.74 $\pm$ 0.17}  & 12.04 $\pm$ 0.35           \\
                           & $4/255$                                                                          & 64.50                  & 63.43               & 62.39                & 62.44                 & 64.17        & 61.70   & 61.88   & 41.43   & 27.10   & 64.35 $\pm$ 0.18 & 58.57 $\pm$ 0.43          & 53.68 $\pm$ 0.92          & 28.61 $\pm$ 0.43          & \textbf{19.77 $\pm$ 0.62}  \\
\bottomrule
\end{tabular}
}
    \vspace{-4mm}
    \end{table*}

\begin{table*}[htbp]
    \centering
    \caption{Test accuracy (\%) of models trained on data that are protected by different defensive noises via adversarial training with different perturbation radii.
    The defensive perturbation radius $\rho_u$ is set as $16/255$ for every type of noise, while the adversarial perturbation radius $\rho_a$ of REM noise and SEM noise take various values.}
    \label{tab:diff_rad_def16_more}
    % \footnotesize
    % \scriptsize
\resizebox{\linewidth}{!}{   
\begin{tabular}{cccccccccccccccc} 
\toprule
\multirow{2}{*}{Dataset}   & \multirow{2}{*}{\begin{tabular}[c]{@{}c@{}}Adv. Train. \\ $\rho_a$\end{tabular}} & \multirow{2}{*}{Clean} & \multirow{2}{*}{EM} & \multirow{2}{*}{TAP} & \multirow{2}{*}{NTGA} & \multicolumn{5}{c}{REM}                              & \multicolumn{5}{c}{SEM}                                                                                                 \\ 
\cmidrule(l){7-16}
                           &                                                                                  &                        &                     &                      &                       & $\rho_a = 0$ & $2/255$ & $4/255$ & $6/255$ & $8/255$ & $\rho_a = 0$             & $2/255$          & $4/255$                   & $6/255$          & $8/255$                    \\ 
\midrule
\multirow{5}{*}{CIFAR-10}  & $0$                                                                              & 94.66                  & 16.84               & 11.29                & \textbf{10.91}        & 13.69        & 13.15   & 19.51   & 24.54   & 24.08   & 15.12 $\pm$ 0.32         & 14.49 $\pm$ 0.13 & 12.64 $\pm$ 1.66          & 13.52 $\pm$ 2.58 & 17.96 $\pm$ 1.99           \\
                           & $2/255$                                                                          & 92.37                  & 22.46               & 78.01                & 19.96                 & 21.17        & 17.73   & 20.46   & 21.89   & 26.30   & 24.58 $\pm$ 0.43         & 15.80 $\pm$ 0.14 & \textbf{13.82 $\pm$ 0.61} & 17.81 $\pm$ 1.48 & 19.59 $\pm$ 1.63           \\
                           & $4/255$                                                                          & 89.51                  & 41.95               & 87.60                & 32.80                 & 45.87        & 31.18   & 23.52   & 26.11   & 28.31   & 63.91 {\small $\pm$ 1.04} & 18.25 {\small $\pm$ 0.32} & \textbf{15.45 {\small $\pm$ 0.35}} & 17.36 {\small $\pm$ 0.63} & 21.15 {\small $\pm$ 4.53} \\
                           & $6/255$                                                                          & 86.90                  & 52.13               & 85.44                & 60.64                 & 66.82        & 58.37   & 40.25   & 30.93   & 31.50   & 80.07 {\small $\pm$ 0.28} & 31.88 {\small $\pm$ 0.91} & \textbf{18.14 {\small $\pm$ 0.43}} & 22.59 {\small $\pm$ 1.18} & 22.47 {\small $\pm$ 2.43} \\
                           & $8/255$                                                                          & 84.79                  & 64.71               & 82.56                & 74.50                 & 77.08        & 72.87   & 63.49   & 46.92   & 36.37   & 83.12 $\pm$ 0.43         & 58.10 $\pm$ 0.99 & 40.58 $\pm$ 1.03          & 33.24 {\small $\pm$ 1.38}             & \textbf{31.57 $\pm$ 1.61}  \\ 
\midrule
\multirow{5}{*}{CIFAR-100} & $0$                                                                              & 76.27                  & 1.44                & 4.84                 & 1.54                  & 2.14         & 4.16    & 4.27    & 5.86    & 11.16   & \textbf{1.22 $\pm$ 0.04} & 1.30 $\pm$ 0.06  & 1.94 $\pm$ 0.29           & 3.45 $\pm$ 0.94  & 5.12 $\pm$ 0.42            \\
                           & $2/255$                                                                          & 68.91                  & 5.21                & 64.59                & 5.21                  & 6.68         & 5.04    & 4.83    & 7.86    & 13.42   & 3.71 $\pm$ 0.16          & 1.70 $\pm$ 0.08  & \textbf{1.59 $\pm$ 0.05}  & 6.69 $\pm$ 0.38  & 9.17 $\pm$ 0.22            \\
                           & $4/255$                                                                          & 64.50                  & 35.65               & 61.48                & 18.43                 & 28.27        & 9.80    & 6.87    & 9.06    & 14.46   & 26.87 $\pm$ 0.39         & 4.83 $\pm$ 0.16  & \textbf{1.81 $\pm$ 0.21}  & 7.48 $\pm$ 0.08  & 10.29 $\pm$ 0.25           \\
                           & $6/255$                                                                          & 60.86                  & 56.73               & 57.66                & 46.30                 & 47.08        & 35.25   & 30.16   & 14.41   & 17.67   & 46.88 $\pm$ 0.27         & 25.14 $\pm$ 0.44 & \textbf{6.91 $\pm$ 0.22}  & 8.12 $\pm$ 0.36  & 12.22 $\pm$ 0.28           \\
                           & $8/255$                                                                          & 58.27                  & 56.66               & 55.30                & 50.81                 & 54.23        & 49.82   & 54.55   & 33.86   & 23.29   & 53.01 $\pm$ 0.54         & 41.34 $\pm$ 0.43 & 26.04 $\pm$ 0.34          & 25.46 $\pm$ 0.23 & \textbf{14.24 $\pm$ 0.16}  \\
\bottomrule
\end{tabular}

    }
\end{table*}

\header{Effect of $\ell_\infty$ attacking algorithms} {We provide additional results on REM with different $\ell_\infty$-norm adversarial attack algorithms, including FGSM, TPGD, and APGD} (from \texttt{torchattacks}). As shown in Tab. \ref{tab:diff-attack}, results suggest that they are still inferior to SEM, indicating the limitation root in the REM optimization.

\begin{table}[H]
    \small
        \centering
        
        \caption{
            REM with different $\ell_\infty$ attacking algorithms. 
        }
        \label{tab:diff-attack}
        \resizebox{\linewidth}{!}{
            \begin{tabular}{cccccc} 
                \toprule
Methods &SEM & REM+PGD & REM+FGSM & REM+TPGD                                  & REM+APGD \\
                \midrule
Test Acc. ($\downarrow$) &               \textbf{ 29.53 }&48.16   & 40.95 & 33.19 & 80.87                        \\
                \bottomrule
                \end{tabular}
        }
\end{table}

\header{Generation time. } We present the defensive noise generation time cost using REM and SEM. The results in Tab. \ref{tab:def_noise_time_cost} suggest that SEM is $\sim $ 4 times faster than the REM approach.

\begin{table}[H]
\centering
\caption{Time costs comparison for generating defensive noise between REM and SEM.}
\label{tab:def_noise_time_cost}
\resizebox{1.0\linewidth}{!}{
\begin{tabular}{ccccc} 
\toprule
Dataset          & REM   & SEM(Ours) & Reduce (↓) & \multicolumn{1}{l}{Speedup ($\times$)}  \\ 
\midrule
CIFAR-10/100         & 22.6h & \textbf{5.78h}     & -16.82h    & 3.91                                    \\
ImageNet  Subset & 5.2h  & \textbf{1.33h}     & -3.87h     & 3.90                                    \\
\bottomrule
\end{tabular}

}
\end{table}

\section{Understanding SEM Noise via UMAP 2D Projection}
\label{app. under}
In this section, we uncover more insights into the effectiveness of the stable error-minimizing noise with the UMAP 2D Projection method. Our term of ``stability'' of noise mainly refers to their resistance against \textit{more diverse perturbation} during the evaluation phase. Based on our empirical observation, the adversarial perturbations in the REM optimization turn out to be monotonous and not aligned with the more diverse perturbations during evaluation. This phenomenon might be caused by the mismatched training dynamic since, during the noise training phase, the defensive noise can be updated while it is fixed during evaluation. {We term our method ``stable'' error-minimizing since we leverage more diverse random perturbation to train defensive noise based on our insights that the actual perturbation during the evaluation phase is more aligned with random perturbation.} Based on Eq. 9, we further define stability as follows: 
\begin{definition}[Delusiveness of noise]
    \label{def:delusiveness}
    Given defensive noise $\mathbf{\delta^u}$
    %  and adversarial perturbation $\delta^a$ with radius $\rho_a$, 
     we define its delusiveness $\mathcal{R}_D (\mathbf{\delta^u})$ as the testing loss of the model $f_{\theta^*}$ that trained with data transformations:
    \begin{align}
        \label{eq:delusiveness}
        \small D(\mathbf{\delta^u})=\underset{(x,y)\sim \mathcal{D}}{\mathbb{E}}\left[ \mathcal{L} \left( f_{\theta ^*}(x),y \right) \right] ,\\
        \small 
        \mathrm{s}.\mathrm{t}. \theta ^*=\underset{\theta}{\mathrm{arg}\min}\sum_{\left( x_i,y_i \right) \in \mathcal{T}}{\underset{t\sim T}{\mathbb{E}}\mathcal{L} \left( f_{\theta}\left( t(x_i+\delta _{i}^{u}) \right) ,y_i \right)}.
    \end{align}
\end{definition}
\vspace{-7pt}
\begin{definition}[Stability of noise]
    \label{def:stability}
     Given $\mathbf{\delta^u}$ and adversarial perturbation $\mathbf{\delta^a}$ with radius $\rho_a$, its stability $\mathcal{S}(\mathbf{\delta^u}, \rho_a)$ is opposite to its worst-case delusiveness under perturbation:
    \begin{align}
        \label{eq:stability}
        \small 
        \mathcal{S}(\mathbf{\delta^u}, \rho_a)=\underset{||\delta _{i}^{a}||_p\le \rho _a}{\min} \mathcal{R}_D(\mathbf{\delta^u}+\mathbf{\delta^a}).
    \end{align}
\end{definition}
\vspace{-7pt}
\begin{figure}[thbp]
    \centering
    \includegraphics[width=\linewidth]{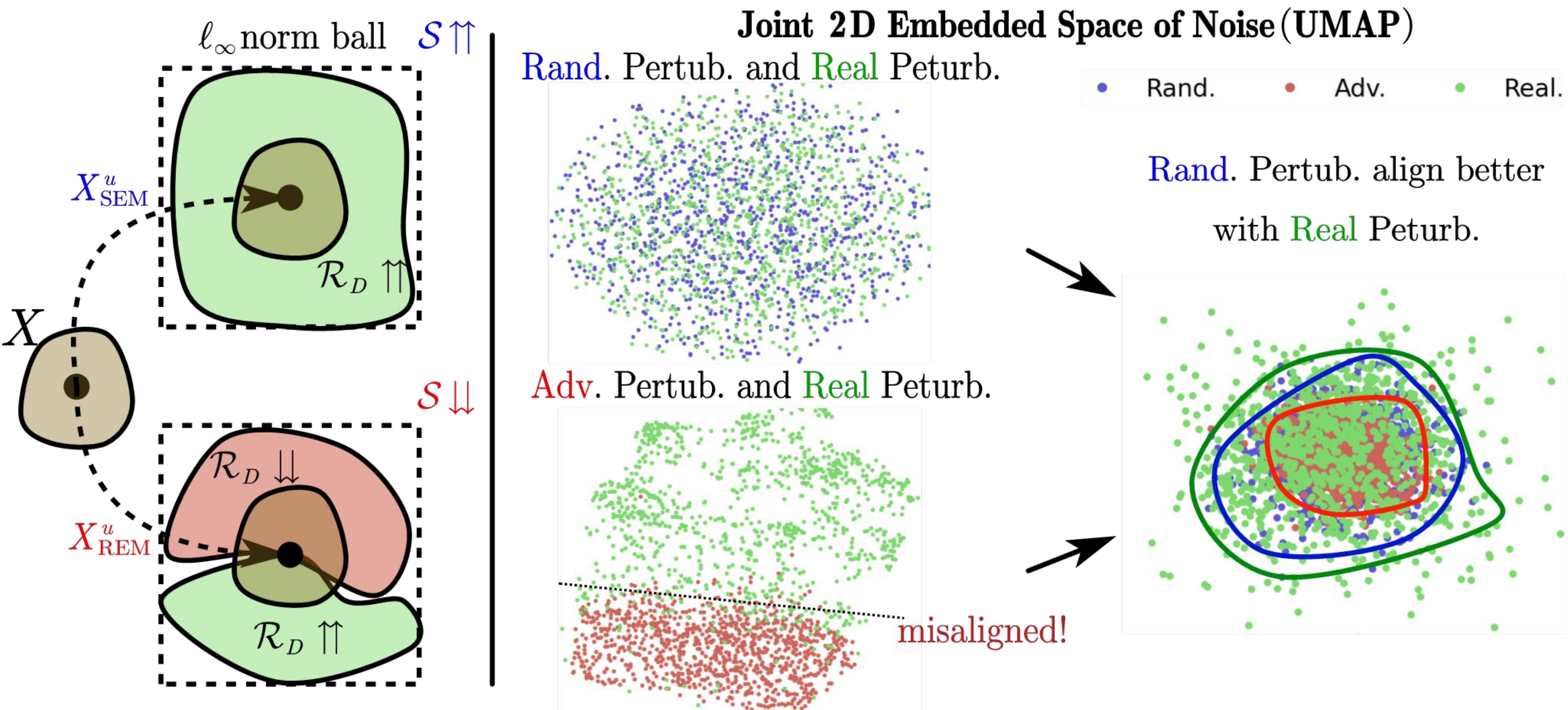}
    \caption{
        Understanding of the superiority of SEM compared to REM with UMAP visualization of noise.  
        % Results suggest that random perturbations are more aligned with the real perturbations during evaluation, yielding more stable protection. 
    }
    \label{fig:understanding}
\end{figure}
To validate our statements, we conduct UMAP visualization of the random perturbation and adversarial perturbation in the REM training and actual evaluation phase. As shown in Fig. \ref{fig:understanding}, adversarial perturbations in REM turn out to be monotonous, and thus, the induced defensive noise is not ``stable'' when the actual perturbation falls outside the specific region. In contrast, the random perturbation in SEM is more aligned with the real perturbation during evaluation.

\begin{figure}[htbp]
\addfigs{cifar10}{rad-8}{1250}{}{1.0}
\hspace{0.2cm}
\addfigs{cifar10}{rad-8}{1251}{}{1.0}
\vspace{0.4cm}
\addfigs{cifar10}{rad-8}{1252}{}{1.0}
\hspace{0.2cm}
\addfigs{cifar10}{rad-8}{1253}{}{1.0}
% \vspace{0.4cm}
\addfigs{cifar10}{rad-8}{1257}{}{1.0}
% \vspace{0.4cm}
\addfigs{cifar10}{rad-8}{1258}{}{1.0}
% \vspace{0.4cm}
\caption{
More Visualization results on CIFAR-10. Examples of data are protected by error-minimizing noise (EM), targeted adversarial poisoning noise (TAP), shortcut noise (SC), robust error-minimizing noise (REM), and stable error-minimizing noise (SEM).
Note that $\rho_u$ is set as $8/255$, and the adversarial perturbation radius $\rho_a$ is set as $4/255$.
}
\label{fig:more-vis-example}
\end{figure}
\begin{figure}[htbp]
\addfigs{cifar100}{rad-8}{1264}{}{1.0}
\hspace{0.2cm}
\addfigs{cifar100}{rad-8}{1265}{}{1.0}
\vspace{0.4cm}
\addfigs{cifar100}{rad-8}{1266}{}{1.0}
\hspace{0.2cm}
\addfigs{cifar100}{rad-8}{1253}{}{1.0}
\vspace{0.4cm}
\addfigs{cifar100}{rad-8}{1254}{}{1.0}
\hspace{0.2cm}
\addfigs{cifar100}{rad-8}{1255}{}{1.0}
\caption{
More Visualization results on CIFAR-100. Examples of data are protected by error-minimizing noise (EM), targeted adversarial poisoning noise (TAP), shortcut noise (SC), robust error-minimizing noise (REM), and stable error-minimizing noise (SEM).
Note that $\rho_u$ is set as $8/255$ and the adversarial perturbation radius $\rho_a$ is set as $4/255$.
}
\label{fig:more-vis-example-cifar100}
\end{figure}
\begin{figure}[htbp]
\addfigs{img-three}{rad-8}{1000}{}{1.0}
\hspace{0.2cm}
\addfigs{img-three}{rad-8}{1001}{}{1.0}
\vspace{0.4cm}
\addfigs{img-three}{rad-8}{1052}{}{1.0}
\hspace{0.2cm}
\addfigs{img-three}{rad-8}{1152}{}{1.0}
\vspace{0.4cm}
\addfigs{img-three}{rad-8}{1223}{}{1.0}
\hspace{0.2cm}
\addfigs{img-three}{rad-8}{1259}{}{1.0}
\vspace{0.3cm}
\caption{
More Visualization results on ImageNet Subset. Examples of data are protected by error-minimizing noise (EM), targeted adversarial poisoning noise (TAP), shortcut noise (SC), robust error-minimizing noise (REM), and stable error-minimizing noise (SEM).
Note that $\rho_u$ is set as $8/255$, and the adversarial perturbation radius $\rho_a$ is set as $4/255$.
}
\label{fig:more-vis-img-sub}
\end{figure}
\begin{figure}[htbp]
\addfigs{face-10}{rad-8}{6}{}{1.0}
\hspace{0.2cm}
\addfigs{face-10}{rad-8}{42}{}{1.0}
\vspace{0.4cm}
% \addfigs{face-10}{rad-8}{51}{}{1.0}
\addfigs{face-10}{rad-8}{58}{}{1.0}
\hspace{0.2cm}
\addfigs{face-10}{rad-8}{56}{}{1.0}
\vspace{0.4cm}
\addfigs{face-10}{rad-8}{73}{}{1.0}
\hspace{0.2cm}
\addfigs{face-10}{rad-8}{66}{}{1.0}
\vspace{0.3cm}
\caption{
More Visualization results on the Facescrub dataset. Examples of data are protected by error-minimizing noise (EM), targeted adversarial poisoning noise (TAP), shortcut noise (SC), robust error-minimizing noise (REM), and stable error-minimizing noise (SEM).
Note that $\rho_u$ is set as $8/255$, and the adversarial perturbation radius $\rho_a$ is set as $4/255$.
}
\label{fig:more-vis-face}
\end{figure}

\fi

\end{document}